\let\classAND\AND
\let\AND\relax
\let\AND\classAND
\theoremstyle{plain}
\theoremstyle{definition}
\theoremstyle{plain}
\newtheorem{theorem}{Theorem}[section]
\newtheorem{proposition}[theorem]{Proposition}
\newtheorem{lemma}[theorem]{Lemma}
\theoremstyle{definition}
\newtheorem{assumption}[theorem]{Assumption}
\theoremstyle{remark}
\title{Gradient Masked Averaging for Federated Learning}
\renewcommand\AB@affilsepx{, \protect\Affilfont}
\author[ ,1,$\nabla$]{Irene Tenison\thanks{This work was done when Irene Tenison was at Universite de Montreal and Mila, Quebec AI Institute.}}
\author[2,3]{Sai Aravind Sreeramadas}
\author[1]{Vaikkunth Mugunthan}
\author[4]{Edouard Oyallon}
\author[2,3]{Irina Rish}
\author[3,5]{Eugene Belilovsky}
\affil[1]{Massachussetts Institute of Technology, MA, USA}
\affil[2]{Universite de Montreal, Quebec, Canada}
\affil[3]{Mila, Quebec AI Institute, Canada}
\affil[4]{Sorbonne University, Paris, France}
\affil[5]{Concordia University, Quebec, Canada\hspace{1cm}}
\affil[$\nabla$]{itenison@mit.edu}
\renewcommand{\Affilfont}{\mdseries \itshape}
\begin{document}

\maketitle

\begin{abstract}
Federated learning (FL) is an emerging paradigm that permits a large number of clients with heterogeneous data to coordinate learning of a unified global model without the need to share data amongst each other. A major challenge in federated learning is the heterogeneity of data across client, which can degrade the performance of standard FL algorithms. Standard FL algorithms involve averaging of model parameters or gradient updates to approximate the global model at the server. However, we argue that in heterogeneous settings, averaging can result in information loss and lead to poor generalization due to the bias induced by dominant client gradients. We hypothesize that to generalize better across non-i.i.d datasets, the algorithms should focus on learning the invariant mechanism that is constant while ignoring spurious mechanisms that differ across clients. Inspired from recent works in Out-of-Distribution generalization, we propose a gradient masked averaging approach for FL as an alternative to the standard averaging of client updates. This aggregation technique for client updates can be adapted as a drop-in replacement in most existing federated algorithms.  We perform extensive experiments on multiple FL algorithms with in-distribution, real-world, feature-skewed out-of-distribution, and quantity imbalanced datasets and show that it provides consistent improvements, particularly in the case of heterogeneous clients.
\end{abstract}

\section{Introduction}
\label{sec:intro}

Federated Learning (FL) is a distributed machine learning approach that allows clients with decentralized data to efficiently learn a shared global model without having to share their sensitive datasets \cite{mcmahan2017communication, kairouz2021advances}. This enhances privacy as data is neither collected at a central location or cloud nor communicated over any channel. Furthermore, \cite{qiu2021federated} and \cite{qiu2021look} argue that federated learning has a lower carbon footprint than traditional machine learning. A challenge in FL is heterogeneity in the data distributed across clients. The non-i.i.d data distribution degrades the performance of federated learning models \cite{Li_2020, wang2019adaptive, zhao2018federated}. One of the reasons for this is the loss of information on invariances between clients induced by the averaging of model parameters or updates. This is further exacerbated by the multiple local steps taken by each client with the aim of reducing communication rounds, which results in "client drift"\cite{karimireddy2021scaffold}. Each client after multiple local steps can progress too far towards minimizing their local objective, which may deviate from that of the global objective.


Many existing FL works attempt to tackle this problem through the lens of optimization, proposing constrained gradient optimization based approaches \cite{karimireddy2021scaffold, wang2019adaptive,li2019feddane} which attempt to maintain a solution close to the global optimum across the federated data. Another branch of work has considered approaches based on knowledge distillation \cite{zhu2021data} to address this problem. On the other hand we can consider the challenge of local training and evaluation on different clients with varying distributions as a distribution shift problem. One can thus consider leveraging many of the recent advances in tackling distribution shifts made in areas such as OOD generalization \cite{arjovsky2020invariant}. 

OOD generalization is often formalized using the notion of domains or environments. Under the formalism of \cite{arjovsky2020invariant} an environment corresponds to a data generating distribution that can be related through underlying (potentially unknown) causal variables to a set of other environments. 
\begin{figure}
  \begin{center}
    \includegraphics[width=0.52\textwidth]{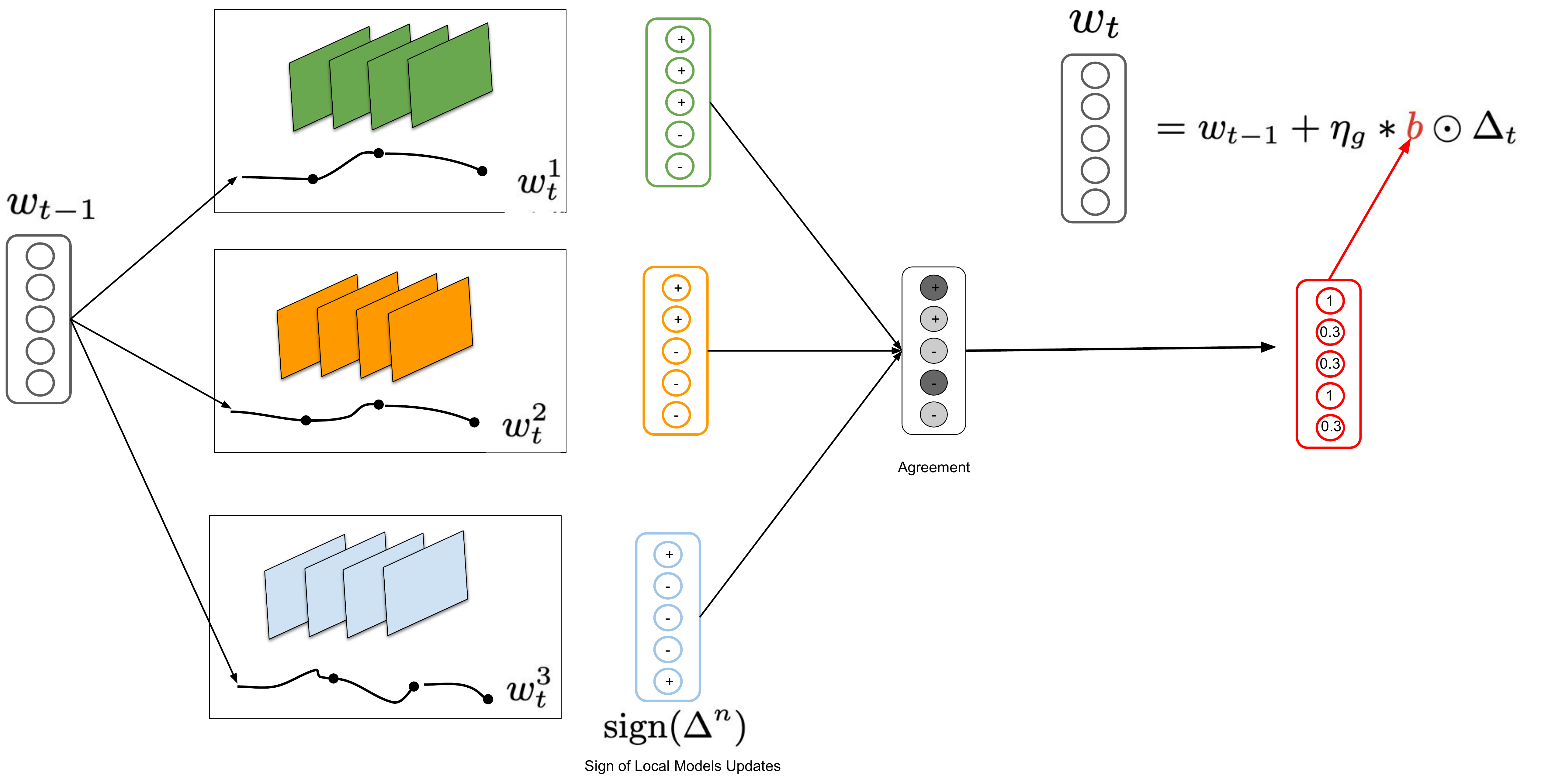}
  \end{center}
  \caption{\small{Illustration of Gradient Masked Averaging (GMA). Each client receives a global model and performs multiple iterations to obtain a final update sent to the server. The server uses the sign of each update component to determine an elementwise agreement score. This is then used to adjust the overall global model update, emphasizing components where client's agree on their direction.}}
  \label{fig:diagram}
  \vspace{-3mm}
\end{figure}
Different environments can arise during model training and testing, while it is typically assumed that all environments (train and test) share some invariant mechanisms.
However, they can have spurious mechanisms that differ across environments \cite{parascandolo2020learning, buhlmann2018invariance}. The concept of environments can be related to the FL setting involving decentralized data by considering each client as producing a set of data generated from a different environment. Clients have invariant mechanisms across them that can lead to robust global models, as well as spurious mechanisms associated to their local data. Leveraging the existing strategies in OOD generalization can thus potentially lead to more robust global model. 

Recently \cite{parascandolo2020learning} proposed a simple approach to improve generalization in the Out-of-Distribution~(OOD) setting. Their approach studied in the centralized setting modifies gradient based optimization by applying a masking operation to individual sample gradients. Identifying this is an approach which can create more robust predictors and tackle the inherent distribution shift problem in FL we propose a new aggregation method called gradient masked averaging (GMA) with the goal of improving generalization across clients and of the global model. GMA operates on client updates from different FL models, determining their component-wise agreement, and subsequently applying a soft-masking operation to perform the final update. It is illustrated in Figure~\ref{fig:diagram}.

GMA can be plugged into any FL algorithm as an alternative to naive averaging of model parameters at the server. Intuitively, gradient masking prioritizes gradient components that are aligned with the overall dominant direction across clients while the inconsistent components of the gradient are given lower importance when making the update. Applying this approach leads to improved performance in a variety of training and evaluation settings across a number of standard datasets used for federated learning. Our performance gains are particularly large in cases where the clients are non-iid.
Code for our experiments is include in the supplementary materials and will be made available at the time of publication.

\textbf{Our Contributions} in this work are as follows:
\vspace{-0.1cm}
\begin{itemize}
    \item We introduce gradient masked averaging as a drop-in robust alternative to naive averaging of parameters.
    \item We empirically show that applying gradient masking to any FL algorithm consistently improves performance of the algorithm. This improvement was observed on in-distribution evaluations and its various settings, quantity skewed datasets, real world evaluations, and more complex out-of-distribution evaluation datasets.
\end{itemize}

\section{Related Work}
\textbf{Federated Learning} \hspace{0.2cm} In FedAVG \cite{mcmahan2017communication} for each communication round, all randomly selected $B$ fraction of clients (called participating clients) perform $E$ local steps of gradient descent with their local datasets. The model parameters from participating clients are averaged at the server to obtain the global model. It is equivalent to FedSGD \cite{mcmahan2017communication} when $E=1$ and each client performs stochastic gradient descent. Multiple local steps help minimize communication costs, which is a major bottleneck in FL. Quantization methods \cite{reisizadeh2020fedpaq} and gradient descent acceleration \cite{yuan2021federated} methods have been proposed to reduce communication overhead. Convergence of FedAVG under i.i.d settings have been analyzed widely  \cite{stich2018local, yu2018parallel, wang2019cooperative}. The convergence rate of FedAVG worsens with increasing heterogeneity among client datasets and this has been analyzed by several works \cite{Li_2020, wang2019adaptive}. Multiple variations of FedAVG have been proposed to improve convergence in non-i.i.d data distribution settings, including adding regularization to the client objective \cite{Li_2020}, normalized averaging of model parameters \cite{wang2020tackling}, introducing server momentum \cite{hsu2019measuring} or adaptive optimizers at the server \cite{reddi2021adaptive}, and introducing control variates \cite{karimireddy2021scaffold}. Generalization in federated learning has been explored in terms of out-of-sample and participation gaps in \cite{yuan2021mean}. \cite{francis2021causal} explores adaptations of IRM to exploit invariance in federated learning, \cite{FLGames} explores the same from a game theoretic perspective, and \cite{FedADG} employs federated adversarial training for the same. PFNM \cite{yurochkin2019probabilistic} and FedMA\cite{wang2020federated} addresses the problems due to permutation variances in the neural networks. Algorithms like PerFedAVG \cite{fallah2020personalized}, Ditto \cite{li2021ditto}, FedBABU \cite{oh2021fedbabu}, FedSHIBU \cite{fedshibu} focus on client personalization.
\paragraph{Out of Distribution Generalization} \hspace{0.2cm} In traditional machine learning, a model is evaluated based on its test performance on an unseen dataset drawn i.i.d from the train data distribution. However, this assumption may not be true in real-world datasets and many supervised learning models do not perform well on related but non-i.i.d test datasets. This problem is often referred to as the out-of-distribution generalization or the closely related domain generalization problem \cite{koyama2021invariance, ahuja2020invariant}. This problem has been addressed in several works like Invariant Risk Minimization(IRM) \cite{arjovsky2020invariant}, Risk Extrapolation(REx) \cite{krueger2021outofdistribution}, and Gradient Starvation \cite{pezeshki2021gradient}. These approaches typically focus on introducing penalties that learn invariant representations in a setting with known variations in the data (corresponding to environments). 
However, this idea cannot be easily ported to a FL setting as the clients performing the optimization steps would require access to the data of other clients. On the other hand \cite{parascandolo2020learning} proposed a gradient agreement method based on gradient directions to learn features that agree across environments. This was extended by \cite{shahtalebi2021sandmask} to include gradient magnitude. In this paper we focus on this class of gradient agreement methods. Distinct from the prior work, which considers the case of individual samples and single global updates, we adapt this approach to a federated setting, where each client produces an aggregate update based on multiple gradient iterations.
\section{Methods}
In the following section we introduce the notations used, review standard federated aggregation, and then introduce gradient masked averaging for global model approximation.
\subsection{Federated Aggregation}
Fix $\mathcal{D}$ a training set that we assume splitted across $N$ clients in $\{\mathcal{D}^n\}_{n\leq N}$ local datasets of size $s_n$. The objective of the clients is to collectively learn a function via supervision, $f: \mathbb{R}^d \rightarrow \mathbb{R}$  that, in our case, corresponds to a neural network model with parameters, $w\in \mathbb{R}^d$. For this, we define the global objective function in terms of the local objective, $F_n$, of each client as shown below, assuming that $\mathbb{E}_{z^n\sim \mathcal{D}^n}[f_n(w;z^n)]=F_n(w)$, where $f_n$ is the clients objective function. In our setting, we are thus interested in minimizing:
\[\min_w f(w)=\min_w\sum_{n=1}^N F_n(w)\,,\]

When using the local data, with each client having $s_n$ samples, we focus on minimizing the empirical objective function, given by:
\[\min_w\sum_{n=1}^N \frac{1}{s_n}\sum_{z^n\in \mathcal{D}^n}f_n(w;z^n)\,.\]

For this, we consider a standard first order minimization scheme. At each communication round, $t$, and for a total of $T$ iterations,  the parameters of the global model, $w_t$, are sent to each of the $N$ participating clients which perform multiple local gradient steps to obtain a local update, $\Delta_t^n$, corresponding to the difference between the clients model after multiple updates and $w_t$.  In most FL algorithms the global model update at $t^{th}$ communication round is then obtained as 
\begin{equation}
    \label{global model update}
    w_{t+1} = w_t - \eta_g \Delta_t \hspace{0.25cm}\text{ where }\hspace{0.25cm} \Delta_t = \frac{1}{|N|}\sum_{n\in N} \Delta_t^n\,,
\end{equation}
$\eta_g$ is the global learning rate and
$\Delta_t$ is the update or "pseudo-gradient" at the $t^{th}$ global communication round obtained by aggregating the updates from the participating clients $\{\Delta_t^n\}_{n\leq N}$. In this context, for the local learning rate client $\eta_l$, we obtain a local update during $K$ iterations, via local parameters $\{w^n_{t,k}\}_{k\leq K}$ and batches $z^n_{t,k}$ given by:
\begin{equation}
\Delta_t^n=-\eta_l \frac{s_n}{\sum_{n\leq N}s_n}\sum_{k=1}^K\nabla f_n(w^n_{t,k};z^n_{t,k})\,.
\end{equation}
In the case of a single gradient step at each client, the update $\Delta_t$, corresponds to the gradient of the global objective. Each client has a different data distribution and thus different loss surface.  \cite{parascandolo2020learning} shows that averaging of gradients across environments leads to poor consistency of solutions, and reduced generalization, particularly to unseen environments. Indeed naive averaging of parameters fails to capture the consistencies in the loss landscapes due to the bias that may be induced by dominant features in the environments as explained by \cite{shahtalebi2021sandmask}. This is further exacerbated in real world federated settings as there are multiple possible scenarios where some clients dominate over others. 

\subsection{Gradient Masked Aggregation}
\cite{parascandolo2020learning} highlights several issues associated with the standard arithmetic mean, which they equate to a logical OR. They propose to use an analog of the logical AND operation resulting in taking geometric mean of sample gradients.  The gradient components that are "inconsistent" in sign across environments are set to 0. Specifically they construct a binary vector, $ m_\tau$ based on the agreement of gradient components among environments. The components $j$ of the mask, $m_{\tau}$ is computed as $[m_\tau]_j = \mathbbm{1}\|\frac{1}{|N|} \sum_{e \in \eta} \mathrm{sign}([\nabla L_e]_j) \geq \tau\|$. Here $\nabla L_e$ is the loss gradient for environment $e$, and $\tau \in [0,1]$ is a hyperparameter.

\begin{wrapfigure}{L}{0.47\textwidth}
\begin{minipage}{0.47\textwidth}
\begin{algorithm}[H]
     \small
   \caption{Gradient Masked FedAVG \cite{mcmahan2017communication}} 
   \label{alg:example}
     $\textbf{Server Executes:}
  $
\begin{algorithmic}
   \STATE$ \text{Initialize }w_0 \in \mathbb{R}^d \text{ randomly}$ 
    \FOR{$\text{each server epoch, $t = 1,2,3,..., T$}$}
    \STATE$\text{Choose C clients at random}$
      \FOR{$\text{each client $n$ in C}$}
        \STATE $w_t^n = \mathrm{ClientUpdate}(w_{t-1},n)
        $
        \STATE $\Delta_t^n = \frac{s_n}{\sum_{k\in C} s_k}(w_t^n - w_{t-1}) 
        $
      \ENDFOR
      \STATE $\Delta_t = \sum_{n\in C} \Delta_t^n$
      \STATE $m_t = \tilde{m}_{\tau}(\{\Delta_t^n\}_{n\in C})$ 
      \STATE $ w_t = w_{t-1} + \eta_g*m_t \odot  \Delta_t$
    \ENDFOR
    
    \end{algorithmic}
 \vspace{0.1cm}
  $\textbf{ClientUpdate(w, n):}
  $
 \begin{algorithmic}
    \STATE $\text{Initialize }w_0 = w
    $
    \FOR{$\text{each local client iteration, $k=0,1,2,3,..,K-1$}$}
    \STATE Sample $z\sim \mathcal{D}^n$
    \STATE $g_k = \nabla_{w}F_n(w_k;z)$
      \STATE $w_{k+1} = w_{k} - \eta_l\text{ } g_k
      $
    \ENDFOR \\
    \RETURN  $w_{K}$ $\text{ to server}$
\end{algorithmic}
\end{algorithm}
\vspace{-40pt} 
\end{minipage}
  \end{wrapfigure}
Direct application of this idea to the FL setting is however challenging. \cite{parascandolo2020learning} applies the rule assuming each sample represents an environment, whereas each client more naturally corresponds to the environment in FL. Furthermore, they show that this can lead to a slower convergence rate in practice as too many components can be masked at each iteration. The same was observed in a federated setting as shown in Appendix. This would be impractical in the federated setting as we would not want to sacrifice convergence speed for generalization. In the federated setting, we propose a variant of this mask that doesn't sacrifice convergence speed while retaining some of the improved generalization properties. Specifically, we propose to use masking at the aggregation stage of standard FL, with a mask computed based on each client update (which arises from multiple local gradient steps). The mask is calculated based on sign agreement among client updates $\Delta_t$ and applied on the global model update $\Delta_t^n$. This masking controls the parameter update based on the agreement of direction among the gradients across clients or environments. To provide rapid convergence, we apply a soft masking procedure instead of the hard binary mask.

We define an agreement score, $A_j \in (0,1]$, given as a function of all client updates, and the mask $\tilde{m}_{\tau}$ is defined element-wise,
\begin{equation}
\label{ILC FL}
    [\tilde{m}_\tau]_j = 1 \textbf{ if } \text{A}_j \geq \tau \textbf{ else } \text{A}_j;
\hspace{0.2cm} \text{A} = \left|\frac{1}{|N|}\sum_{n \leq  N} \mathrm{sign}(\Delta^n)\right|
\end{equation}
 The global model update is given by $\tilde{m}_\tau(\{\Delta^n\})\odot\Delta$. This ensures that the updates to the global model are with respect to their agreement across clients. When the agreement across clients is greater than the hyperparameter $\tau$, it would be assigned 1 and when the agreement is lesser than $\tau$, the mask value would be equivalent to the agreement score. This real mask ensures that each parameter updates, but the magnitude is adjusted to be proportional to the agreement across clients. We observe empirically in Appendix that the fraction of clients whose magnitude gets adjusted as mentioned above is correlated to the heterogeneity in the distribution of data across clients. The method can be applied to any base FL optimizer, in Algorithm 1 we detail its implementation for FedAVG.

 \section{Method Analysis}
We now analyze the convergence properties of the gradient masking, focusing on the case of FedAVG and gradient masked aggregation. To obtain a standard convergence result our algorithm requires to lower-bound on the norm of the masked gradients, requiring additional assumptions. We propose a simplification, which allows to obtains similar convergence guarantees as \cite{reddi2021adaptive}: we assume that the binary masks follow independent Bernoulli distributions. Also, note that we propose our proofs in the context of FedAVG rather than FedADAM, making the analysis more straightforward. 
Following \cite{reddi2021adaptive}, we make the following standard assumptions.

\begin{assumption}[Lipschitz gradient] \label{A1}We assume that each client objective has Lipschitz gradient with constant $L$, meaning that there exists $L>0$, $\forall n\leq N, \forall w,v\in \mathbb{R}^d$, $\Vert \nabla F_n(w)-\nabla F_n(v)\Vert\leq L\Vert w-v\Vert$
\end{assumption}
\begin{assumption}[Bounded gradients] \label{A2}We assume that each client has a bounded gradient by $G$, leading to: $\exists G>0, \forall n\leq N, \forall w\in \mathbb{R}^d, \Vert \nabla F_n(w)\Vert\leq G$.
\end{assumption}
\begin{assumption}[Finite variance] \label{A3} We assume a global bound on the variance of the gradient estimate of each individual client, meaning that: $\exists \sigma>0, \forall n\leq N, \forall w\in \mathbb{R}^d, j\leq d, \mathbb{E}_z [\nabla F_n(w)-\nabla f_n(w;z)]^2_j\leq \sigma_{l,j}^2$.
\end{assumption}
\begin{assumption}[Global variance] \label{A4} We assume a global bound on the variance of the gradient estimate of each individual client, meaning that: $\forall j, \exists \sigma_{g,j}>0, \forall n, \forall w, \frac 1N\sum_{n=1}^N (\nabla F_n(w)-\nabla f(w))_j^2\leq \sigma_{g,j}^2$.
\end{assumption}

The next assumption is core and allows to work with random masks $m^j_t\in \{0,1\}$ at step $t$, assuming every coordinates could  be unmasked with a non zero probability..
\begin{assumption}[Random mask.] \label{A5} We assume that $\inf_{j\leq d} \mathbb{E}[m^j_t]\geq \alpha>0$ and its sampling is independent from $\nabla f_n(w_t;z^n_t)$.
\end{assumption}

We begin by this first Lemma, which allows to work with weighted  agregations of the local gradients:
\begin{lemma}[Adaptation from Reddi et al, using the notations of \cite{reddi2021adaptive}] If $\{w^t_{n,k}\}_{k\leq K}$ are the $K$ local iterations of the $n$-th machine at time $t$, we have for any $\sum_n\lambda_n=1$, $\lambda_i\geq 0$\label{adaptation}:
\begin{align*}\sum_{n=1}^N\lambda_i\mathbb{E}[\Vert w_{n,k}^t-w_t\Vert^2]\leq 5K\eta_l^2\mathbb{E}\sum_{j=1}^d(\sigma^2_{l,j}+2K\sigma_{g,j}^2)
+30K^2\eta_l^2\mathbb{E}[\Vert \nabla f(w_t)\Vert^2]
\end{align*}

\end{lemma}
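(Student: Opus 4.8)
The plan is to reproduce the standard bounded client-drift argument of \cite{reddi2021adaptive} in the present notation, establishing a recursion for $D_k := \mathbb{E}[\Vert w_{n,k}^t - w_t\Vert^2]$ in the local step index $k$, unrolling it over $k\leq K$, and then averaging against the weights $\lambda_n$. The masking plays no role here, so Assumption~\ref{A5} is not needed; the relevant ingredients are smoothness (Assumption~\ref{A1}) and the two variance bounds (Assumptions~\ref{A3} and~\ref{A4}). Throughout I would use that every client initializes at the shared server model, $w_{n,0}^t = w_t$, so that $D_0 = 0$, and that the local step reads $w_{n,k}^t = w_{n,k-1}^t - \eta_l \nabla f_n(w_{n,k-1}^t; z^n_{t,k-1})$.

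First I would peel off the stochastic noise by writing $\nabla f_n(w_{n,k-1}^t; z) = \nabla F_n(w_{n,k-1}^t) + \xi$ with $\xi$ zero-mean. Taking conditional expectation kills the cross term and, by the per-coordinate bound of Assumption~\ref{A3}, the noise contributes an additive $\eta_l^2 \sum_{j} \sigma_{l,j}^2$, so it remains to control $\mathbb{E}[\Vert (w_{n,k-1}^t - w_t) - \eta_l \nabla F_n(w_{n,k-1}^t)\Vert^2]$.

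Next I would decompose the deterministic gradient three ways, $\nabla F_n(w_{n,k-1}^t) = [\nabla F_n(w_{n,k-1}^t) - \nabla F_n(w_t)] + [\nabla F_n(w_t) - \nabla f(w_t)] + \nabla f(w_t)$, bounding the first piece by $L^2 D_{k-1}$ via Assumption~\ref{A1}, the second (the heterogeneity gap) by $\sigma_{g,j}^2$ via Assumption~\ref{A4}, and keeping the last as $\Vert \nabla f(w_t)\Vert^2$. Combining with the relaxed triangle inequality $\Vert x+y\Vert^2 \le (1+a)\Vert x\Vert^2 + (1+1/a)\Vert y\Vert^2$ with $a$ of order $1/K$ yields a recursion of the shape $D_k \le (1 + \tfrac{1}{K-1} + cK\eta_l^2 L^2)\,D_{k-1} + \eta_l^2\sum_j\sigma_{l,j}^2 + c'K\eta_l^2\sum_j\sigma_{g,j}^2 + c''K\eta_l^2 \Vert\nabla f(w_t)\Vert^2$.

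Finally I would unroll this from $D_0 = 0$ up to $k \le K$, sum the geometric series, and impose a step-size restriction of the form $\eta_l \lesssim 1/(LK)$ so that the multiplicative factor $(1 + \tfrac{1}{K-1} + cK\eta_l^2L^2)^K$ is bounded by a universal constant; averaging the per-client bound against $\lambda_n$ with $\sum_n \lambda_n = 1$ and invoking the averaged form of Assumption~\ref{A4} then gives the stated inequality. I expect the only real difficulty to be the constant bookkeeping, namely choosing $a$ and the step-size threshold precisely enough that the collapsed geometric sum produces exactly the coefficients $5K$ and $30K^2$; there is no conceptual obstacle, since the argument is a direct transcription of the drift lemma of \cite{reddi2021adaptive}.
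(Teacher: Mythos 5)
Your proposal is correct and follows essentially the same route as the paper: the paper simply quotes the per-client recursion from Lemma 3 of \cite{reddi2021adaptive} (the very recursion you rederive via the noise split, the three-way gradient decomposition, and the relaxed triangle inequality), then takes the $\lambda_n$-weighted average, bounds the heterogeneity term via Assumption~\ref{A4}, and notes that the unrolling argument applies verbatim to $\phi_k=\sum_n\lambda_n\mathbb{E}\Vert w_{n,k}^t-w_t\Vert^2$. The only cosmetic differences are that you unroll per client before averaging (rather than averaging the recursion first) and that you make explicit the step-size restriction $\eta_l\lesssim 1/(LK)$, which the paper's statement leaves implicit by deferring to \cite{reddi2021adaptive}.
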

\begin{proof}

As done in Appendix A, Lemma 3 of \cite{reddi2021adaptive}, we will prove the result by induction, and we  use the same notations. First, we use the inequality  proved by  Lemma 3 of \cite{reddi2021adaptive}:
\begin{align*}
\mathbb{E}\Vert w_{n,k}^t-w_t\Vert^2&\leq (1+\frac{1}{2K-1}+6K\eta_l^2L^2)\mathbb{E}[\Vert w_{n,k-1}^t-w_t\Vert^2]
+\eta_l^2\mathbb{E}\sum_{j=1}\sigma^2_{l,j}+6K\mathbb{E}[\Vert \eta_l(\nabla F_n(w_t)-\nabla f(w_t))\Vert^2\\
& +6K\eta_l^2\mathbb{E}\Vert \nabla f(w_t)\Vert^2]
\end{align*}
Next, we note that by weighted averaging that:
\begin{align*}
\sum_{n=1}^N\lambda_i\mathbb{E}\Vert w_{n,k}^t-w_t\Vert^2
&\leq (1+\frac{1}{2K-1}+6K\eta_l^2L^2)\sum_{n=1}^N\lambda_i\mathbb{E}[\Vert (w_{n,k-1}^t-w_t)\Vert^2]
+\eta_l^2\mathbb{E}\sum_{j=1}\sigma^2_{l,j}\\
&+6K\sum_{n=1}^N\lambda_i\mathbb{E}[\Vert \eta_l(\nabla F_n(w_t)-\nabla f(w_t))\Vert^2+6K\eta_l^2\mathbb{E}\Vert \nabla f(w_t)\Vert^2]
\end{align*}
Now, using our assumption, we get:
\begin{align}
    \sum_{i=1}^m\lambda_i\mathbb{E}[\Vert (\nabla F_i(w_t)-\nabla f(w_t))\Vert^2&\leq \sum_{i=1}^m\lambda_i\sum_{j=1}^d \sigma_{g,j}^2=\sum_{j=1}^d\sigma_{g,j}^2
\end{align}

The rest of the proof follows directly as in \cite{reddi2021adaptive} as it relies purely on $\phi_k\triangleq \sum_{n=1}^N\lambda_n \mathbb{E}\Vert w_{n,k}^t-w_t\Vert^2$.
\end{proof}

Our proofs will rely on the following Lemma, where the constant $C$ is specified in Lemma A\ref{adaptation} given in the Appendix. We emphasize that the logic of this proof and the constants are identical to \cite{reddi2021adaptive}.

\begin{lemma}[Bounded drift from client update, Adapted from Appendix A, Lemma 3 of \cite{reddi2021adaptive}]\label{drift} Given Assumptions \ref{A1}, \ref{A2},\ref{A3}, \ref{A4}, there exists $C>0$ such that for any time step $t$ and  $w_t,\Delta_t$ obtained from Alg. 1, if $\eta_l\leq \frac 1{8LK}$:
\[ \mathbb{E}[\Vert \Delta_t+K\eta_l\nabla f(w_t)\Vert^2]\leq 5K^2(\eta_l^2+L^2\eta_l^4)\mathbb{E}\sum_{j=1}^d(\sigma^2_{l,j}+3K\sigma_{g,j}^2)
+30L^2K^3\eta_l^4\mathbb{E}[\Vert \nabla f(w_t)\Vert^2]\]
and
\[ \Vert\mathbb{E}[ \Delta_t+K\eta_l\nabla f(w_t)]\Vert^2\leq 5^2KL^2\eta_l^4\mathbb{E}\sum_{j=1}^d(\sigma^2_{l,j}+3K\sigma_{g,j}^2)
+30L^2K^3\eta_l^4\mathbb{E}[\Vert \nabla f(w_t)\Vert^2]\]
\end{lemma}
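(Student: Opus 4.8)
The plan is to mirror the argument of Reddi et al.\ (their Appendix~A, Lemma~3), adapted to the data-proportional weighting, and to reduce both displayed inequalities to the local-drift bound already established in Lemma~\ref{adaptation}. Writing $g_{n,k}=\nabla f_n(w^t_{n,k};z^n_{t,k})$ for the stochastic local gradients, the aggregated update is $\Delta_t=-\eta_l\sum_n\lambda_n\sum_{k=0}^{K-1}g_{n,k}$, and with weights $\lambda_n$ satisfying $\sum_n\lambda_n=1$ and $f=\sum_n\lambda_n F_n$ (so $\nabla f(w_t)=\sum_n\lambda_n\nabla F_n(w_t)$) one obtains the exact identity
\[
\Delta_t+K\eta_l\nabla f(w_t)=-\eta_l\sum_{n}\lambda_n\sum_{k=0}^{K-1}\bigl(g_{n,k}-\nabla F_n(w_t)\bigr).
\]
The next step is to split each summand into a zero-mean stochastic part and a deterministic drift part, $g_{n,k}-\nabla F_n(w_t)=\bigl(g_{n,k}-\nabla F_n(w^t_{n,k})\bigr)+\bigl(\nabla F_n(w^t_{n,k})-\nabla F_n(w_t)\bigr)$.

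For the first (second-moment) bound I would take $\mathbb{E}\Vert\cdot\Vert^2$ of this identity and exploit that the stochastic terms are unbiased (Assumption~\ref{A3}) and independent across $n$ and $k$, so all their cross terms vanish and their total contribution is controlled coordinatewise by $\sigma_{l,j}^2$. For the drift part I apply Cauchy--Schwarz to pull the double sum out of the norm (producing the factor $K$), then use the $L$-Lipschitz bound (Assumption~\ref{A1}) to replace $\Vert\nabla F_n(w^t_{n,k})-\nabla F_n(w_t)\Vert$ by $L\Vert w^t_{n,k}-w_t\Vert$, and finally invoke Lemma~\ref{adaptation} on $\sum_n\lambda_n\mathbb{E}\Vert w^t_{n,k}-w_t\Vert^2$. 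The global-variance bound (Assumption~\ref{A4}) is what converts the raw drift into the $\sigma_{g,j}^2$ contribution, so that regrouping yields the $\sigma_{l,j}^2+3K\sigma_{g,j}^2$ pattern and the leading factors $5K^2(\eta_l^2+L^2\eta_l^4)$ and $30L^2K^3\eta_l^4$ quoted in the statement.

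For the second bound I would instead take the expectation \emph{before} squaring. Unbiasedness then annihilates the stochastic term entirely, leaving $\mathbb{E}[\Delta_t+K\eta_l\nabla f(w_t)]=-\eta_l\sum_n\lambda_n\sum_k\mathbb{E}[\nabla F_n(w^t_{n,k})-\nabla F_n(w_t)]$; applying Jensen, then Lipschitzness, then again Lemma~\ref{adaptation} gives the same drift estimate but carrying an extra $L^2\eta_l^2$, which is precisely why this second inequality has the smaller $5^2KL^2\eta_l^4$ prefactor and no standalone $\eta_l^2$ variance term. The stepsize condition $\eta_l\le\frac1{8LK}$ is used to keep the geometric recursion inside Lemma~\ref{adaptation} contractive and to absorb the lower-order constants when matching coefficients.

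I do not expect a genuine conceptual obstacle, since the authors note that the logic and constants are identical to \cite{reddi2021adaptive}; the real work is bookkeeping. The single point requiring care is verifying that replacing the uniform average by the $\lambda$-weighted one changes nothing essential — this is exactly what Lemma~\ref{adaptation} was set up to guarantee, so I would make sure I only apply Jensen/Cauchy--Schwarz where $\sum_n\lambda_n=1$, $\lambda_n\ge0$ holds, and I would track the constants $5$, $5^2$, and $30$ carefully to land on the two displayed inequalities exactly.
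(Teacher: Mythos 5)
Your proposal matches the paper's proof in all essentials: the same identity $\Delta_t+K\eta_l\nabla f(w_t)=-\eta_l\sum_{k}\sum_n\lambda_n\bigl(\nabla f_n(w^n_{t,k};z)-\nabla F_n(w_t)\bigr)$, the same Cauchy--Schwarz/Jensen steps over $k$ and the $\lambda$-weighted sum over $n$, the same bias--variance split combined with $L$-smoothness and Lemma~\ref{adaptation}, and the same unbiasedness argument for the squared-mean bound. The only minor caveat is that your vanishing cross terms across $k$ follow from a martingale/tower-property argument rather than genuine independence (the iterates $w^n_{t,k}$ are sequentially dependent); the paper sidesteps this entirely by applying Cauchy--Schwarz over $k$ before taking expectations, but this does not change the substance of the argument.
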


Its proof can be found in the appendix. The next proposition derives an upper bound  on the model gradient following results from  \cite{reddi2021adaptive}, and relates the speed of convergence to $\alpha$.
\vspace{3pt}
\begin{proposition}[Convergence analysis] Given Assumptions \ref{A1}, \ref{A2}, \ref{A3}, if $\eta_g=\mathcal{O}(1)$ then there exists $\eta_l=\mathcal{O}(\frac{\alpha }{KL})$, such that one has the following rate over the masked gradients given by the FedAVG algorithm :
\begin{align*}
\frac{1}{T}\sum_{t=0}^{T-1} \Vert \nabla f(w_t)\Vert^2\leq \mathcal{O}(\frac{1}{\sqrt{\alpha^2T}}+\frac{1}{\alpha^2T})
\end{align*}
\end{proposition}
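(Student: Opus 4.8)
The plan is to follow the smoothness-based descent argument of \cite{reddi2021adaptive}, with the masking handled through Assumption~\ref{A5}. Starting from the $L$-smoothness descent inequality applied to the server update $w_{t+1}=w_t+\eta_g\, m_t\odot\Delta_t$, I would write
\[
\mathbb{E}_t[f(w_{t+1})]\leq f(w_t)+\eta_g\,\mathbb{E}_t\langle\nabla f(w_t),\,m_t\odot\Delta_t\rangle+\tfrac{L}{2}\eta_g^2\,\mathbb{E}_t\Vert m_t\odot\Delta_t\Vert^2,
\]
where $\mathbb{E}_t$ denotes conditional expectation given $w_t$. The crux is the inner-product term. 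Because Assumption~\ref{A5} makes the mask independent of the client gradients, I can factor the expectation coordinate-wise as $\sum_j[\nabla f(w_t)]_j\,\mathbb{E}[m_t^j]\,\mathbb{E}_t[(\Delta_t)_j]$, and then substitute $\mathbb{E}_t[\Delta_t]=-K\eta_l\nabla f(w_t)+e_t$ with $\Vert e_t\Vert^2$ controlled by the second bound of Lemma~\ref{drift}.

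The leading piece becomes $-K\eta_l\sum_j\mathbb{E}[m_t^j]\,[\nabla f(w_t)]_j^2\leq -\alpha K\eta_l\Vert\nabla f(w_t)\Vert^2$, using $\mathbb{E}[m_t^j]\geq\alpha$; this is exactly where the factor $\alpha$ enters and why the effective descent is weakened. The residual inner-product contribution $\sum_j[\nabla f(w_t)]_j\,\mathbb{E}[m_t^j]\,(e_t)_j$ I would bound via $|\mathbb{E}[m_t^j]|\leq 1$ and Young's inequality, absorbing half of the $\alpha K\eta_l\Vert\nabla f(w_t)\Vert^2$ descent while leaving a term $\tfrac{1}{2\alpha K\eta_l}\Vert e_t\Vert^2$. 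For the quadratic term I use $\Vert m_t\odot\Delta_t\Vert^2\leq\Vert\Delta_t\Vert^2$ (binary entries lie in $\{0,1\}$) and split $\mathbb{E}_t\Vert\Delta_t\Vert^2\leq 2\,\mathbb{E}_t\Vert\Delta_t+K\eta_l\nabla f(w_t)\Vert^2+2K^2\eta_l^2\Vert\nabla f(w_t)\Vert^2$, applying the first bound of Lemma~\ref{drift} to the first piece.

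Collecting everything, the per-step inequality takes the shape $\mathbb{E}_t[f(w_{t+1})]\leq f(w_t)-c_1\alpha K\eta_l\Vert\nabla f(w_t)\Vert^2+c_2\,(\text{higher-order }\Vert\nabla f\Vert^2\text{ terms})+c_3\,(\text{variance terms})$, where the variance terms aggregate $\sum_j(\sigma_{l,j}^2+K\sigma_{g,j}^2)$ weighted by powers of $\eta_l,\eta_g,L$. The role of the step-size choice $\eta_l=\mathcal{O}(\alpha/(KL))$ is to force every $\Vert\nabla f(w_t)\Vert^2$ coefficient arising from the error terms (which carry no factor of $\alpha$) to be strictly smaller than the main coefficient $\alpha K\eta_l$ (which does), so that a net negative multiple of $\Vert\nabla f(w_t)\Vert^2$ survives. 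Telescoping over $t=0,\dots,T-1$, dividing by $\alpha K\eta_l T$ with $\eta_g=\mathcal{O}(1)$, and bounding the optimization term by $f(w_0)-f^\star$ gives a bound of the form $\frac{1}{T}\sum_t\Vert\nabla f(w_t)\Vert^2\lesssim\frac{f(w_0)-f^\star}{\alpha K\eta_l T}+\frac{\eta_l}{\alpha}\sum_j(\sigma_{l,j}^2+K\sigma_{g,j}^2)$; substituting $\eta_l\sim\alpha/(KL)$ into the first term produces the $\mathcal{O}(1/(\alpha^2 T))$ contribution, while balancing the two terms (equivalently taking $\eta_l\sim 1/(\alpha\sqrt{T})$ when the constraint permits) produces the $\mathcal{O}(1/\sqrt{\alpha^2 T})$ contribution.

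I expect the main obstacle to be the bookkeeping around the mask: one must verify that the independence in Assumption~\ref{A5} genuinely lets $\mathbb{E}[m_t^j]$ be pulled out of the inner product while the mask still appears inside the quadratic term, and that the sign of the surviving descent term remains correct after the Young's-inequality splitting. A secondary subtlety is ensuring the $\alpha$-dependence of $\eta_l$ is tight enough that both the $1/\sqrt{\alpha^2 T}$ and $1/(\alpha^2 T)$ scalings emerge, rather than a worse power of $\alpha$.
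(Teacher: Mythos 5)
Your proposal is correct and follows essentially the same route as the paper's own proof: the $L$-smoothness descent step, the decomposition of $\Delta_t$ around $-K\eta_l\nabla f(w_t)$ controlled by the two bounds of Lemma~\ref{drift}, the lower bound $\alpha$ from Assumption~\ref{A5} (the paper writes this as $\alpha\Vert\nabla f(w_t)\Vert^2\leq\mathbb{E}\Vert m_t\odot\nabla f(w_t)\Vert^2$ where you pull $\mathbb{E}[m_t^j]$ out coordinate-wise, which is equivalent for a binary mask independent of the gradients), the restriction $\eta_l=\mathcal{O}(\alpha/(KL))$ to make the error coefficients on $\Vert\nabla f(w_t)\Vert^2$ subdominant, and telescoping followed by stepsize tuning (which the paper delegates to Lemma 17 of \citet{koloskova2020unified}). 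One minor slip: the balancing stepsize is $\eta_l\sim\sqrt{(f(w_0)-f(w^*))/(KTL\sigma^2)}$, carrying no $1/\alpha$ factor, rather than your parenthetical $\eta_l\sim 1/(\alpha\sqrt{T})$, which would give the worse rate $\mathcal{O}(1/(\alpha^2\sqrt{T}))$; with the correct balancing, the rest of your argument goes through unchanged.
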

\begin{proof}
   We follow the approach of  \cite{bottou2010large} for obtaining optimal non-convex bounds. Each $F_n$ is $L$-smooth, thus so is $f$ by averaging, thus, by definition using Alg. \ref{alg:example}:
\begin{align*}
f(w_{t+1})
&\leq f(w_t)+ \langle \nabla f(w_t),w_{t+1}-w_t\rangle
 +\frac L2\Vert w_{t\mathrm{+}1}\mathrm{-}w_t\Vert^2\\
&= f(w_t)+\eta_g \langle \nabla f(w_t),m_t\odot\Delta_t\rangle
+\frac L2\eta_g^2\Vert m_t\odot\Delta_t\Vert^2\\
&\leq  f(w_t)+\eta_g \langle \nabla f(w_t),m_t\odot\Delta_t\rangle
+\frac L2\eta_g^2\Vert \Delta_t\Vert^2\\
\end{align*}

Next, we observe on one side that:
$$\mathbb{E}_{z_{n,t}}\Vert\Delta_t\Vert^2\leq 2\mathbb{E}_{z_{n,t}}\Vert \Delta_t+K\eta_l\nabla f(w_t)\Vert^2+2K^2\eta_l^2\Vert \nabla f(w_t)\Vert^2$$

On the other side, we note that:

\begin{align}
    \eta_g \langle \nabla f(w_t),m_t\odot\Delta_t\rangle&=\eta_g \langle \nabla f(w_t),-K\eta_l(m_t\odot\nabla f(w_t))+K\eta_l(m_t\odot\nabla f(w_t))+m_t\odot\Delta_t\rangle\\
    &=-\eta_g  K\eta_l\Vert m_t\odot \nabla f(w_t)\Vert^2+\eta_g\langle \nabla f(w_t), m_t\odot(K\eta_l\nabla f(w_t)+\Delta_t)\rangle\\
    &=-\eta_g K\eta_l\Vert m_t\odot \nabla f(w_t)\Vert^2+\eta_g \langle m_t\odot\nabla f(w_t), K\eta_l\nabla f(w_t)+\Delta_t\rangle\\
\end{align}
Then, taking the expectation over data $z_{n,t,k}$ and an application of Cauchy-Schwartz leads to:

$$\mathbb{E}_{z_{n,t}}\eta_g \langle \nabla f(w_t),m_t\odot\Delta_t\rangle\leq (-\frac{\eta_g\eta_l}2 K+\frac {\eta_g\eta_l}{2}) \Vert m_t\odot \nabla f(w_t)\Vert^2+\frac{\eta_g}{2\eta_l}\Vert \mathbb{E}\Delta_t+K\eta_l\nabla f(w_t)\Vert^2$$

Now, we note that taking the expectation over the mask gives $\alpha \Vert \nabla f(w_t)\Vert^2\leq \mathbb{E}\Vert m_t\odot\nabla f(w_t)\Vert^2$. Combining and summing for $t=0,...,T-1$, we get:

$$0\leq f(w_0)-f(w^*)+\sum_{t=0}^{T-1} (K^2L\eta_l^2\eta_g^2-\alpha  (K-1)\eta_l\frac{\eta_g}2)\Vert \nabla f(w_t)\Vert^2+(L\eta_g^2\mathbb{E}[\Vert \Delta_t+\nabla f(w_t)\Vert^2]+\frac{\eta_g}{2\eta_l}\Vert \mathbb{E} \Delta_t+\nabla f(w_t)\Vert^2)$$

For the sake of simplicity, we introduce $A=5K^2\mathbb{E}\sum_{j=1}^d(\sigma^2_{l,j}+3K\sigma_{g,j}^2)$ and $B=30K^3$, so that using Lemma \ref{drift}:
$$0\leq f(w_0)-f(w^*)+\sum_{t=0}^{T-1} (K^2L\eta_l^2\eta_g^2-\alpha  (K-1)\eta_l\frac{\eta_g}2)\Vert \nabla f(w_t)\Vert^2+L\eta_g^2(A(L^2\eta_l^4+\eta_l^2)+(A+B)\frac{\eta_g}{2\eta_l}L^2\eta_l^4+2B\eta_l^4L^2\mathbb{E}[\Vert \nabla  f(w_t)\Vert^2])$$

which also writes:
$$0\leq f(w_0)-f(w^*)+\sum_{t=0}^{T-1} (K^2L\eta_l^2\eta_g^2+2B\eta_l^4\eta_g^2L^3+\frac B2\eta_gL^2\eta_l^3-\alpha  (K-1)\eta_l\frac{\eta_g}2)\Vert \nabla f(w_t)\Vert^2+L\eta_g^2A(L^2\eta_l^4+\eta_l^2)+\frac{\eta_g}{2}(A+B)L^2\eta_l^3$$

Here, let's pick $\eta_g=1$, leading to, for $0<\eta_l<0.001\frac{\alpha}{KL}$ (where we clearly have suboptimal constants):

$$0\leq f(w_0)-f(w^*)+\sum_{t=0}^{T-1} -\alpha  \eta_l\frac{K}8\Vert \nabla f(w_t)\Vert^2+10KL\eta_l^2$$

Reorganizing the term, this writes fir $0<\eta_l<0.001\frac{\alpha}{KL}$ and $\sigma^2$ an upper bound over the sum of the variances:

$$\frac{1}{T}\sum_{t=0}^{T-1} \Vert \nabla f(w_t)\Vert^2\leq 8\frac{f(w_0)-f(w^*)}{\alpha \eta_l K T}+\frac{80L\eta_l\sigma^2}{\alpha}$$

Now, we invoke Lemma 17 of \citet{koloskova2020unified} which states that there exists $\eta_l^*<0.001\frac{\alpha}{KL}$ such that there is an absolute $C>0$ satisfying:

$$\frac{1}{T}\sum_{t=0}^{T-1} \Vert \nabla f(w_t)\Vert^2\leq C(\frac{1}{\alpha}\sqrt{L\sigma^2\frac{f(w_0)-f(w^*)}{ KT}}+\frac{L(f(w_0)-f(w^*))}{ \alpha^2T})$$
and this leads to the final conclusion.

\end{proof}

While Assumption \ref{A5} may not always hold in practice, it allows us to explore the highly non-convex nature of  masks given by Eq. 3. This theoretical analysis serves as a sanity check, providing insights that the process of masking is unlikely to significantly impede convergence towards a particular local minimum. However, it is important to note that this theorem  alone does not explain the observed strong performance. Instead, it indicates that there are no inherent obstacles preventing the algorithm from converging to a local minimum. Therefore, this simplification of the problem setting serves as a valuable starting point for further exploration and refinement of the masking approach, allowing for the investigation of more realistic and complex scenarios.

\begin{table}[b]\tiny
\resizebox{\columnwidth}{!}{
\begin{tabular}{cccccccc}
\hline
\textbf{Dataset}                                        & \textbf{Skew}                                              & \textbf{\begin{tabular}[c]{@{}c@{}}\# of\\ classes\end{tabular}} & \textbf{\begin{tabular}[c]{@{}c@{}}\# classes\\ in label \\ skew\end{tabular}} & \textbf{\begin{tabular}[c]{@{}c@{}}total \\ \# clients\end{tabular}} & \textbf{\begin{tabular}[c]{@{}c@{}}sampled\\ \# clients\end{tabular}} & \textbf{Model} & \textbf{\begin{tabular}[c]{@{}c@{}}\# local\\ epochs\end{tabular}} \\ \hline
                                                        &                                                            &                                                                  &                                                                                &                                                                      &                                                                       &                &                                                                    \\
MNIST                                                   & label                                                      & 10                                                               & 2                                                                              & 500, 100, 10                                                         & 10, 10, 10                                                            & LeNet          & 1                                                                  \\
FMNIST                                                  & label                                                      & 10                                                               & 2                                                                              & 500, 100, 10                                                         & 10, 10, 10                                                            & LeNet          & 1                                                                  \\
CIFAR10                                                 & \begin{tabular}[c]{@{}c@{}}label, quantity\end{tabular} & 10                                                               & 2, -                                                                              & 500, 100, 10                                                         & 10, 10, 10                                                            & ResNet18       & 3                                                                  \\
CIFAR100                                                & label                                                      & 100                                                              & 10                                                                             & 500, 100, 10                                                         & 10, 10, 10                                                            & ResNet18       & 3                                                                  \\
\begin{tabular}[c]{@{}c@{}}TinyImageNet\end{tabular} & label                                                      & 200                                                              & 20                                                                             & 500, 100, 10                                                         & 10, 10, 10                                                            & ResNet18       & 12                                                                 \\
FEMNIST                                                 & real                                                       & 10                                                               & -                                                                              &   3550, 3550                                                                   &    35, 350                                                                   & LeNet          &     5                                                               \\
\begin{tabular}[c]{@{}c@{}}FedCMNIST\end{tabular}   & \begin{tabular}[c]{@{}c@{}}feature, mixed\end{tabular}  & 10                                                               & -, 2                                                                           &         500, 100, 10                                                             &      10, 10, 10                                                                 & LeNet          & 1                                                                  \\
\hline
\end{tabular}}
\caption{Datasets used, heterogeneity induced, and other parameters corresponding to the dataset.\vspace{5pt}}
\label{T1}
\end{table}

 \begin{table*}[t]
\resizebox{\columnwidth}{!}{
\begin{tabular}{cccrl@{\hskip 0.25in}r@{\hskip 0.1in}l@{\hskip 0.25in}r@{\hskip 0.1in}l@{\hskip 0.25in}r@{\hskip 0.1in}l}
\hline
\multirow{2}{*}{\textbf{\begin{tabular}[c]{@{}c@{}}Dataset\end{tabular}}} & \multirow{2}{*}{\textbf{\begin{tabular}[c]{@{}c@{}}Non-IID \\ \end{tabular}}} & \multirow{2}{*}{\textbf{\begin{tabular}[c]{@{}c@{}}Number\\ of clients\end{tabular}}} & \multicolumn{2}{c}{\hspace{-15pt}\textbf{FedAVG}} & \multicolumn{2}{c}{\hspace{-15pt}\textbf{FedProx}} & \multicolumn{2}{c}{\hspace{-15pt}\textbf{FedAdam}} & \multicolumn{2}{c}{\textbf{FedYogi}} \\
                                                                                    &                                                                                 &                                                                                       & \textbf{AVG} & \textbf{GMA}         & \textbf{AVG}  & \textbf{GMA}         & \textbf{AVG}  & \textbf{GMA}         & \textbf{AVG}  & \textbf{GMA}         \\ \hline
\multicolumn{1}{l}{}                                                                & \multicolumn{1}{l}{}                                                            & \multicolumn{1}{l}{}                                                                  &              &                      &               &                      &               &                      &               &                      \\
\multirow{6}{*}{MNIST}                                                              &                                                                              & 500                                                                                   & 96.24$\pm$0.03  & \bf{97.36$\pm$0.02} & 95.80$\pm$0.01    & \bf{96.55$\pm$0.03} & 97.75$\pm$0.05   & \fcolorbox{red}{white}{\bf{98.48$\pm$0.05}} & 98.21$\pm$0.06   & \bf{98.38$\pm$0.03}          \\
                                                                                    & Yes                                                                                & 100                                                                                   & 98.6$\pm$0.02   & \bf{98.75$\pm$0.02} & 97.76$\pm$0.02   & \bf{98.64$\pm$ 0.00} & 98.88$\pm$0.04   & \fcolorbox{red}{white}{\bf{98.94$\pm$0.03}} & 98.58$\pm$0.01   & 98.82$\pm$0.07          \\
                                                                                    &                                                                                 & 10                                                                                    & 99.07$\pm$0.02  & \fcolorbox{red}{white}{\bf{99.09$\pm$0.02}}          & 98.91$\pm$0.02   & \bf{99.08$\pm$0.03} & \bf{98.51$\pm$0.05}   & 98.44$\pm$0.05 & 97.54$\pm$0.07   & \bf{98.74$\pm$0.01} \\\cdashline{3-11}
                                                                                    &                                                                               & 500                                                                                   & \bf{97.14$\pm$0.02}  & 96.85$\pm$0.01          & \bf{97.27$\pm$0.03}   & 96.53$\pm$0.03 & 98.02$\pm$0.01   & \fcolorbox{red}{white}{\bf{98.8$\pm$0.02}}  & 97.49$\pm$0.00   & \bf{98.18$\pm$0.01}   \\
                                                                                    & No                                                                                & 100                                                                                   & 98.28$\pm$0.02  & \bf{98.65$\pm$0.02}          & 98.72$\pm$0.02   & \bf{98.79$\pm$0.03} & \fcolorbox{blue}{white}{\bf{98.94$\pm$0.02}}   & 98.72$\pm$0.01& 98.75$\pm$0.00   & \bf{98.82$\pm$0.01}          \\
                                                                                    &                                                                                 & 10                                                                                    & 99.11$\pm$0.02  & \bf{99.17$\pm$0.03}          & 99.13$\pm$0.02   & \fcolorbox{red}{white}{\bf{99.16$\pm$0.02}} & 98.71$\pm$0.02   & \bf{98.77$\pm$0.02}& 98.68$\pm$0.01   & \bf{98.75$\pm$0.00}          \\
                                                                                    &                                                                                 &                                                                                       &              &                      &               &                      &               &                      &               &                      \\
\multirow{6}{*}{FMNIST}                                                             &                                                                              & 500                                                                                   & 78.02$\pm$0.08  & \bf{78.82$\pm$0.11}          & 76.21$\pm$0.12   & \bf{77.77$\pm$0.12} & 78.80$\pm$0.03   &\fcolorbox{red}{white}{\bf{81.28$\pm$0.02}} & 78.66$\pm$0.11   & \bf{80.58$\pm$0.14} \\
                                                                                    & Yes                                                                                & 100                                                                                   & 85.58$\pm$0.04  & \bf{86.27$\pm$0.05} & 85.14$\pm$0.12   & \bf{85.78$\pm$0.08}          & 85.91$\pm$0.04   & \fcolorbox{red}{white}{\bf{87.18$\pm$0.01}} & 85.32$\pm$0.09   & \bf{86.4$\pm$0.14}  \\
                                                                                    &                                                                                 & 10                                                                                    & 87.43$\pm$0.05  & \bf{88.37$\pm$0.07} & 88.25$\pm$0.14   & \bf{88.55$\pm$0.12}          & 87.26$\pm$0.07   & \bf{87.50$\pm$0.02}           & 87.39$\pm$0.12   & \fcolorbox{red}{white}{\bf{88.6$\pm$0.11}}  \\\cdashline{3-11}
                                                                                    &                                                                               & 500                                                                                   & 83.07$\pm$0.06  & \bf{83.94$\pm$0.12}         & \bf{83.47$\pm$0.08}   & 82.77$\pm$0.12          & 85.44$\pm$0.28   & \fcolorbox{red}{white}{\bf{86.73$\pm$0.26}} & 85.55$\pm$0.08   & \bf{86.42$\pm$0.16}          \\
                                                                                    & No                                                                                & 100                                                                                   & 88.03$\pm$0.03  & \bf{88.43$\pm$0.08}          & 87.05$\pm$0.02   & \bf{87.46$\pm$0.04}         & 88.66$\pm$0.18   & \bf{89.16$\pm$0.22}& 88.63$\pm$0.08   & \fcolorbox{red}{white}{\bf{89.7$\pm$0.22}}  \\
                                                                                    &                                                                                 & 10                                                                                    & 89.35$\pm$0.02  & \bf{89.65$\pm$0.03}          & 89.92$\pm$0.12   & \fcolorbox{red}{white}{\bf{90.12$\pm$0.03}}          & 88.66$\pm$0.22   & \bf{88.86$\pm$0.18}& 88.85$\pm$0.18   & \bf{89.85$\pm$0.21} \\
                                                                                    &                                                                                 &                                                                                       &              &                      &               &                      &               &                      &               &                      \\
\multirow{6}{*}{\begin{tabular}[c]{@{}c@{}}CIFAR\\ 10\end{tabular}}                 &                                                                              & 500                                                                                   & 73.38$\pm$0.46  & \bf{74.44$\pm$0.22} & 72.21$\pm$0.22   & \bf{73.95$\pm$0.36} & 76.41$\pm$0.08   & \bf{77.12$\pm$0.12} & 76.13$\pm$0.08   & \fcolorbox{red}{white}{\bf{77.56$\pm$0.04}} \\
                                                                                    & Yes                                                                                & 100                                                                                   & 85.55$\pm$0.41  & \bf{85.62$\pm$0.38}          & 84.11$\pm$0.36   & \bf{85.37$\pm$0.22} & 84.43$\pm$0.08   & \bf{85.52$\pm$0.08} & 84.62$\pm$0.11   & \fcolorbox{red}{white}{\bf{85.95$\pm$0.12}} \\
                                                                                    &                                                                                 & 10                                                                                    & 84.06$\pm$0.31  & \bf{85.21$\pm$$\pm$0.26} & 84.65$\pm$0.41   & \fcolorbox{red}{white}{\bf{85.84$\pm$0.38}} & 84.63$\pm$0.04   & \bf{85.77$\pm$0.03} & 84.24$\pm$0.04   & \bf{85.32$\pm$0.04} \\\cdashline{3-11}
                                                                                    &                                                                               & 500                                                                                   & 75.81$\pm$0.15  & \bf{76.93$\pm$0.11} & \bf{74.43$\pm$0.08}   & 74.19$\pm$0.08 & 77.07$\pm$0.12   & \bf{77.57$\pm$0.12}  & 77.87$\pm$0.08   & \fcolorbox{red}{white}{\bf{78.11$\pm$0.06}}     \\
                                                                                    & No                                                                                & 100                                                                                   & \bf{80.42$\pm$0.14}  & 80.07$\pm$0.16  & \bf{79.84$\pm$0.09}   & 79.6$\pm$0.12  & 84.18$\pm$0.08   & \fcolorbox{red}{white}{\bf{85.59$\pm$0.07}}  & 84.13$\pm$0.11    & \bf{85.27$\pm$0.09}  \\
                                                                                    &                                                                                 & 10                                                                                    & 86.47$\pm$0.04  &\fcolorbox{red}{white}{ \bf{87.38$\pm$0.03}} & \bf{87.04$\pm$0.06}   & 86.89$\pm$0.14          & 87.04$\pm$0.12   & \bf{87.28$\pm$0.16}          & 87.06$\pm$0.11   & \bf{87.17$\pm$0.14}          \\
                                                                                    &                                                                                 &                                                                                       &              &                      &               &                      &               &                      &               &                      \\
\multirow{6}{*}{\begin{tabular}[c]{@{}c@{}}CIFAR\\ 100\end{tabular}}                &                                                                              & 500                                                                                   & 42.68$\pm$0.55  & \bf{44.04$\pm$0.61} & 38.45$\pm$0.64   & \bf{40.74$\pm$0.33} & 47.57$\pm$0.42   & \fcolorbox{red}{white}{\bf{50.37$\pm$0.42}} & 46.27$\pm$0.62   & \bf{48.89$\pm$0.46} \\
                                                                                    & Yes                                                                                & 100                                                                                   & 51.65$\pm$0.55  & \bf{53.83$\pm$0.42} & 49.81$\pm$0.44   & \bf{50.76$\pm$0.21} & 53.4$\pm$0.52    & \bf{56.68$\pm$0.56} & 54.64$\pm$0.43   & \fcolorbox{red}{white}{\bf{56.88$\pm$0.38}} \\
                                                                                    &                                                                                 & 10                                                                                    & 50.88$\pm$0.22  & \bf{52.03$\pm$0.18} & 51.12$\pm$0.36   & \bf{51.76$\pm$0.36}          & 54.66$\pm$0.28   & \bf{56.93$\pm$0.26} & 54.55$\pm$0.22   & \fcolorbox{red}{white}{\bf{56.73$\pm$0.28}} \\\cdashline{3-11}
                                                                                    &                                                                               & 500                                                                                   & 44.36$\pm$0.28  & \bf{46.13$\pm$0.28} & 38.6$\pm$0.21    & \bf{39.55$\pm$0.23} & 44.01$\pm$0.14   & \fcolorbox{red}{white}{\bf{46.38$\pm$0.18}} & 41.12$\pm$0.12   & \bf{44.63$\pm$0.08} \\
                                                                                    & No                                                                                & 100                                                                                   & 56.88$\pm$0.05  & \fcolorbox{red}{white}{\bf{57.19$\pm$0.03}}          & 53.31$\pm$0.12   & \bf{54.13$\pm$0.08}          & 51.33$\pm$0.11   & \bf{51.65$\pm$0.08}          & 51.19$\pm$0.08   & \bf{51.41$\pm$0.12}          \\
                                                                                    &                                                                                 & 10                                                                                    & 56.80$\pm$0.02   & \bf{57.21$\pm$0.03}          & 58.16$\pm$0.04   & \fcolorbox{red}{white}{\bf{58.21$\pm$0.04} }         & 57.41$\pm$0.08   & \bf{57.77$\pm$0.12}          & \bf{55.93$\pm$0.06}   & 55.91$\pm$0.08          \\
                                                                                    &                                                                                 &                                                                                       &              &                      &               &                      &               &                      &               &                      \\
\multirow{6}{*}{\begin{tabular}[c]{@{}c@{}}Tiny\\ Image\\ Net\end{tabular}}         &                                                                              & 500                                                                                   & 20.24$\pm$0.04         & \bf{21.42$\pm$0.03}        & 21.18$\pm$0.08          & \bf{22.36$\pm$0.06}        & 23.65$\pm$0.12          & \fcolorbox{red}{white}{\bf{25.92$\pm$0.06}}        & 23.81$\pm$0.13          & \bf{25.85$\pm$0.18}        \\
                                                                                    & Yes                                                                                & 100                                                                                   & 25.29$\pm$0.08         & \bf{26.94$\pm$0.08}        & \bf{25.64$\pm$0.03} & 25.11$\pm$0.11 & 25.67$\pm$0.17          & \bf{27.42$\pm$0.13}        & 26.32$\pm$0.06          & \fcolorbox{red}{white}{\bf{28.83$\pm$0.12} }       \\
                                                                                    &                                                                                 & 10                                                                                    &   25.17$\pm$0.08           &       \bf{26.17$\pm$0.08}             &     26.32$\pm$0.04          &      \textbf{26.71$\pm$0.04}                &     27.41$\pm$0.08          &         \fcolorbox{red}{white}{\bf{28.11$\pm$0.12}}           &     27.12$\pm$0.08          &   \bf{27.88$\pm$0.12}                   \\\cdashline{3-11}
                                                                                    &                                                                               & 500                                                                                   &24.12$\pm$0.18  & \bf{24.87$\pm$0.12}               &     \bf{24.73$\pm$0.15}          &      24.55$\pm$0.32                &        21.17$\pm$0.12       &     \bf{23.73$\pm$0.14}  & 23.86$\pm$0.16 & \fcolorbox{red}{white}{ \bf{25.32$\pm$0.11}   }                 \\
                                                                                    & No                                                                                & 100                                                                                   &     \bf{26.71$\pm$0.12}&  27.00$\pm$0.16 &    \bf{26.59$\pm$0.15}  &      26.55$\pm$0.35    &        26.99$\pm$0.27       &   \bf{29.07$\pm$0.19} & 27.86$\pm$0.17   &      \fcolorbox{red}{white}{ \bf{29.32$\pm$0.12}  }           \\
                                                                                    &                                                                                 & 10                                                                                    &   \bf{30.66$\pm$0.02} &    30.45$\pm$0.03  & 29.04$\pm$0.04 &  \bf{29.83$\pm$0.04 } & 28.98$\pm$0.04  &  \fcolorbox{red}{white}{  \bf{32.02$\pm$0.08}  } &   27.75$\pm$0.06   &      \bf{30.78$\pm$0.05}\\ \hline
\end{tabular}}
\caption{In-Distribution evaluations in both iid and Non-iid settings on MNIST, FMNIST, CIFAR 10, CIFAR 100, and TinyImageNet distributed across 500, 100, and 10 clients. In all cases, 10 clients are randomly sample fr participation in each round. Average best test
performance of the algorithms and their GMA versions are reported below. In most settings and FL algorithms GMA provides substantial advantages. Improvements are more substantial in Non-iid settings. The best result in each setting across all FL algorithms is shown in red (if GMA) and blue (if AVG). Except in one case GMA version of the algorithm provides the overall best performance. 
}
\label{IID}
\vspace{-15pt}
\end{table*}

\section{Experiments}
In this section, we show empirically that the proposed GMA tends to outperform standard aggregation (AVG), converging at similar or better than standard aggregation, while enhancing the global model generalization. We observe this for multiple FL algorithms with respect to multiple datasets and data distributions (i.i.d and non-i.i.d).

\textbf{Implementation} We conduct experiments on gradient masked and naive versions of non-adaptive federated optimizers like FedAVG \cite{mcmahan2017communication}, FedProx \cite{li2020federated} and adaptive federated optimizers like FedAdam and FedYogi \cite{reddi2021adaptive} across a variety of datasets. Our experiment include label distribution skew, feature distribution skew, quantity skew, real-world data distribution, and mixed (label and feature) skew. The code is available at https://github.com/arvi797/FL. Details of the datasets explored and the respective skews induced is summarised in Table \ref{T1}. The reported performances in Tables \ref{IID} and \ref{OOD} are average accuracies of 4 independent runs of the model on a test dataset of the corresponding dataset, data skew, and algorithm.

\subsection{In-Distribution Evaluation}
 This is the most widely considered setting in the FL literature. The global model is evaluated on a test dataset sampled from data at all clients irrespective of the data distribution across clients. This test dataset is a representation of all participating clients. 
Table \ref{IID} shows the test performance of the algorithms and their GMA versions on a variety of datasets and data splits. It was observed that with the robust features of GMA, the algorithm is capable of outperforming naive averaging. The difference in improvement is more significant when the data distribution is non-i.i.d. The major reason for this is that gradient masking is capable of focusing on learning the invariances even under increased spuriousness of non-i.i.d data distribution. 

\begin{table*}[t]
\resizebox{\columnwidth}{!}{
\begin{tabular}{cccrl@{\hskip 0.25in}r@{\hskip 0.1in}l@{\hskip 0.25in}r@{\hskip 0.1in}l@{\hskip 0.25in}r@{\hskip 0.1in}l}
\hline
\multirow{2}{*}{\textbf{\begin{tabular}[c]{@{}c@{}}Dataset\end{tabular}}} & \multirow{2}{*}{\textbf{\begin{tabular}[c]{@{}c@{}}Mixed\\Non-IID \\ \end{tabular}}} & \multirow{2}{*}{\textbf{\begin{tabular}[c]{@{}c@{}}\# clients, \\ \# sampled\end{tabular}}} & \multicolumn{2}{c}{\hspace{-15pt}\textbf{FedAVG}} & \multicolumn{2}{c}{\hspace{-15pt}\textbf{FedProx}} & \multicolumn{2}{c}{\hspace{-15pt}\textbf{FedAdam}} & \multicolumn{2}{c}{\textbf{FedYogi}} \\
                                                                                    &                                                                                 &                                                                                       & \textbf{AVG} & \textbf{GMA}         & \textbf{AVG}  & \textbf{GMA}         & \textbf{AVG}  & \textbf{GMA}         & \textbf{AVG}  & \textbf{GMA}         \\ \hline
\multicolumn{1}{l}{}                                                                & \multicolumn{1}{l}{}                                                            & \multicolumn{1}{l}{}                                                                  &              &                      &               &                      &               &                      &               &                  \\
                                                                                  \multicolumn{11}{l}{\textbf{Real-World Evaluation}}                                                                                                                                                                                                                                                                                                                                                                                                                           \\
FEMNIST                                                                     &-                                                              & 3000, 30                                                                                                  &94.28$\pm$0.02                            &\fcolorbox{red}{white}{\bf{95.57$\pm$0.05}}                           &94.75$\pm$0.00                           &\bf{95.01$\pm$0.01}                           &90.28$\pm$ 0.03                           &\bf{93.02$\pm$ 0.02}                           &91.22$\pm$ 0.02                &\bf{92.79$\pm$ 0.02}                \\
                                                                                             &                                                                                 & 3000,  10                                                                                                 &\fcolorbox{blue}{white}{\bf{96.23$\pm$ 0.02}}       &95.59$\pm$ 0.00                                &\bf{95.01$\pm$ 0.02}                         &94.37$\pm$ 0.04                      &94.77$\pm$ 0.04                               &\bf{95.32$\pm$ 0.02}                                &94.56$\pm$ 0.04               &\bf{94.76$\pm$ 0.04}                   \\
                                                                                              &                                                                                 & \multicolumn{1}{l}{}                                                                                      & \multicolumn{1}{l}{}            &                                          & \multicolumn{1}{l}{}            &                                 & \multicolumn{1}{l}{}            &                                          & \multicolumn{1}{l}{} &                      \\
\multicolumn{11}{l}{\textbf{Out-of-Distribution Evaluation}}                                                                                                                                                                                                                                                                                                                                                                                                                                                                          \\
\multirow{6}{*}{\begin{tabular}[c]{@{}c@{}}Fed-\\ CMNIST\end{tabular}}                        & \multirow{3}{*}{Yes}                                                            & 500, 10                                                                                                   &84.48$\pm$0.27      &\bf{86.22$\pm$ 0.22}                                    &83.15$\pm$0.17       &\bf{86.25$\pm$0.28}                                &84.22$\pm$0.42      &   \fcolorbox{red}{white}{\bf{87.16$\pm$0.37}}                                       &84.06$\pm$0.36 &\bf{87.05$\pm$0.44}                 \\
                                                                                              &                                                                                 & 100, 10                                                                                                   &85.62$\pm$ 0.28                    &\fcolorbox{red}{white}{\bf{88.32$\pm$ 0.32}}                     &83.3$\tiny{\pm 0.33}$                      &\bf{86.92$\pm$ 0.28}            &84.19$\pm$ 0.42                     &\bf{86.26$\pm$ 0.38}                     &83.32$\pm$0.48              &\bf{84.79$\pm$ 0.22}            \\
                                                                                              &                                                                                 & 10, 10                                                                                                    &86.77$\pm$ 0.43                     &\bf{89.17$\pm$ 0.38}                     &86.84$\pm$ 0.38                     &\bf{89.33$\pm$ 0.34}           &85.75$\pm$ 0.53                   &\bf{89.28$\pm$ 0.44}                     &86.67$\pm$ 0.53         &\fcolorbox{red}{white}{\bf{89.76$\pm$ 0.37}} \\\cdashline{3-11}
                                                                                              & \multirow{3}{*}{No}                                                             & 500, 10                                                                                                   &89.03$\pm$0.46      &\fcolorbox{red}{white}{\bf{90.12$\pm$0.26}}                                      &\bf{87.13$\pm$0.44}      &87.15$\pm$0.32                      &\bf{87.02$\pm$0.08}        &87.18$\pm$0.05                                      &87.78$\pm$0.08    &\bf{88.44$\pm$0.08}       \\
                                                                                              &                                                                                 & 100, 10                                                                                                   &89.88$\pm$ 0.32             &\fcolorbox{red}{white}{\bf{91.37$\pm$ 0.16}}                                      &\bf{90.62$\pm$0.16}                               &90.83$\pm$ 0.22                    &87.38$\pm$0.12                   &\bf{88.24$\pm$0.16}                                 &90.27$\pm$0.08              &\bf{91.10$\pm$0.06}            \\
                                                                                              &                                                                                 & 10, 10                                                                                                    &89.37$\pm$ 0.83                     &\fcolorbox{red}{white}{\bf{90.36$\pm$0.61}}                     &89.61$\pm$0.88                      &\bf{90.22$\pm$0.78}                     &\bf{88.39$\pm$0.86}                     &89.4$\pm$0.84                      &\bf{88.78$\pm$0.83}          &89.88$\pm$0.75\\
                                                                                              &                                                                                 &                                                                                                           &                                 &                                          &                                 &                                 &                                 &                                          &                      &                      \\
                                                                                                               \multicolumn{11}{l}{\textbf{Quantity Skew}}                                                                                                                                                                                                                                                                                                                                                                                                           \\
\multirow{2}{*}{\begin{tabular}[c]{@{}c@{}}CIFAR10\\ ($\beta=0.5$)\end{tabular}} & \multirow{2}{*}{-}                                                             & 100, 10                                                                                                   &79.36$\pm$0.32                             & \bf{80.61$\pm$0.24}                                &80.12$\pm$0.38                                 & \bf{80.88$\pm$0.18}                       &79.24$\pm$0.38                                 &\fcolorbox{red}{white}{\bf{82.95$\pm$0.41}}                                &79.81$\pm$0.32                      &\bf{81.62$\pm$0.48}           \\
                                                                                              &                                                                                 & 10, 10                                                                                                    &84.78$\pm$0.44 &\bf{86.28$\pm$ 0.38} &83.11$\pm$0.55 &\bf{84.42$\pm$0.46} &85.92$\pm$0.16 &\fcolorbox{red}{white}{\bf{86.58$\pm$0.28}} & 84.98$\pm$0.15          &\bf{86.47$\pm$0.29} \\ \hline
\end{tabular}
}
\caption{Real-world evaluation on FEMNIST, feature skew and mixed (feature and label) skew on FedCMNIST and FedRotMNIST, and Quantity skew on CIFAR 10. Total number of clients and number of clients sampled for participation in each communication round is as given in the corresponding row. Average best test performance of the algorithms and their GMA versions are reported below. The best average result among AVG and GMA having atleast 1\% higher than the other algorithm is shown in bold.
}
\label{OOD}
\vspace{5pt}
\end{table*}

\begin{table*}[h]
\begin{center}
\begin{tabular}{cccc}
\includegraphics[width=3.8cm]{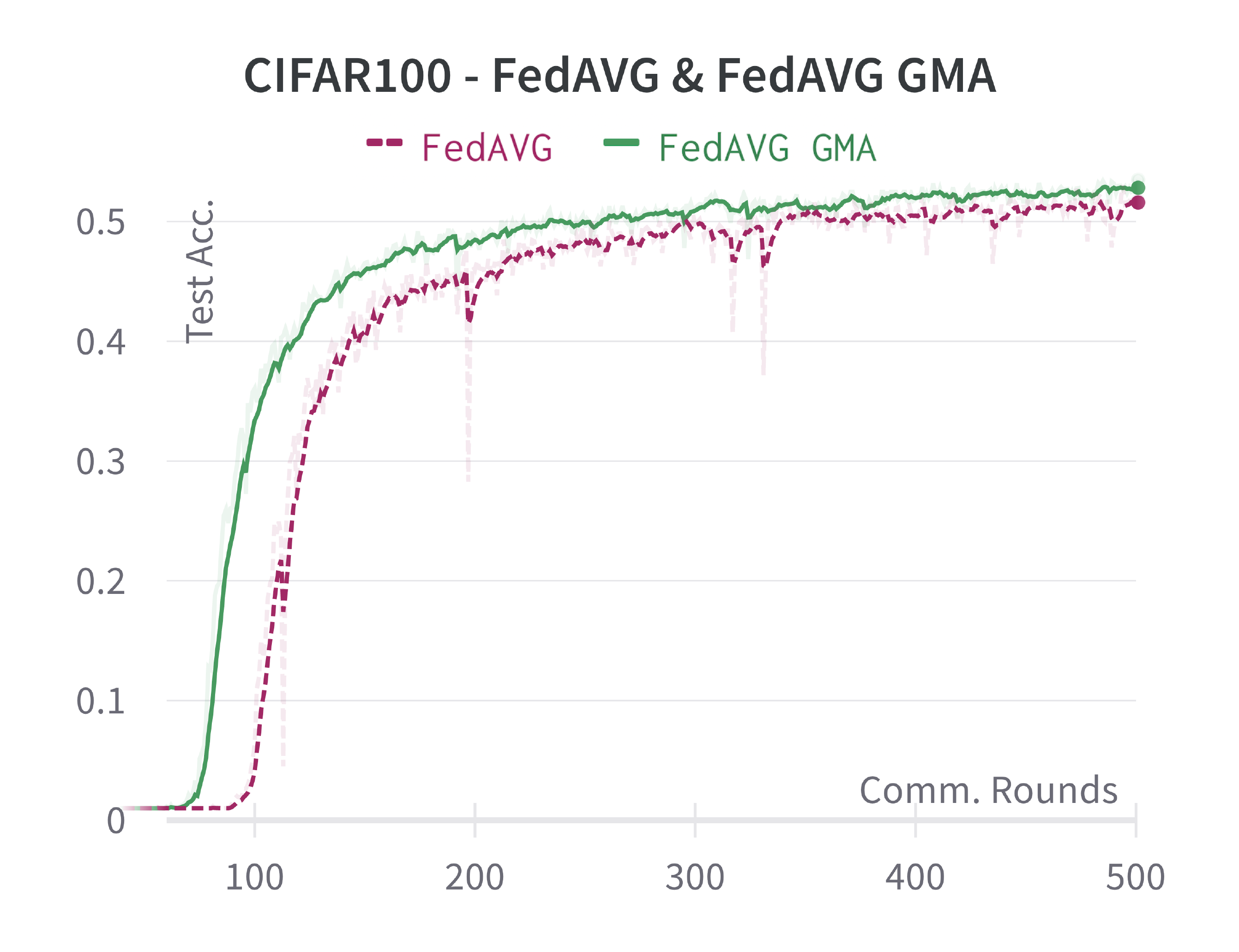}&
\includegraphics[width=3.8cm]{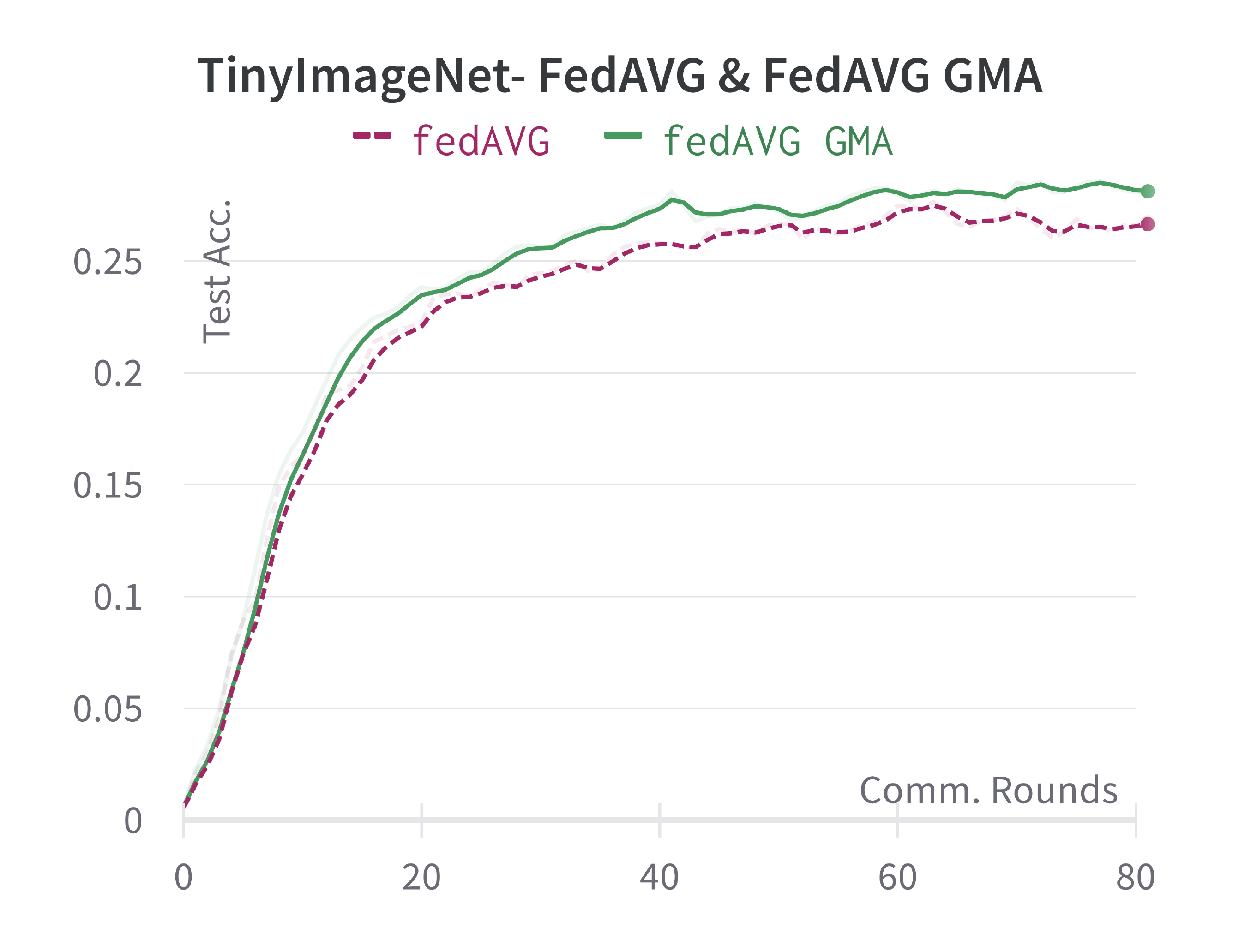}&
\includegraphics[width=3.8cm]{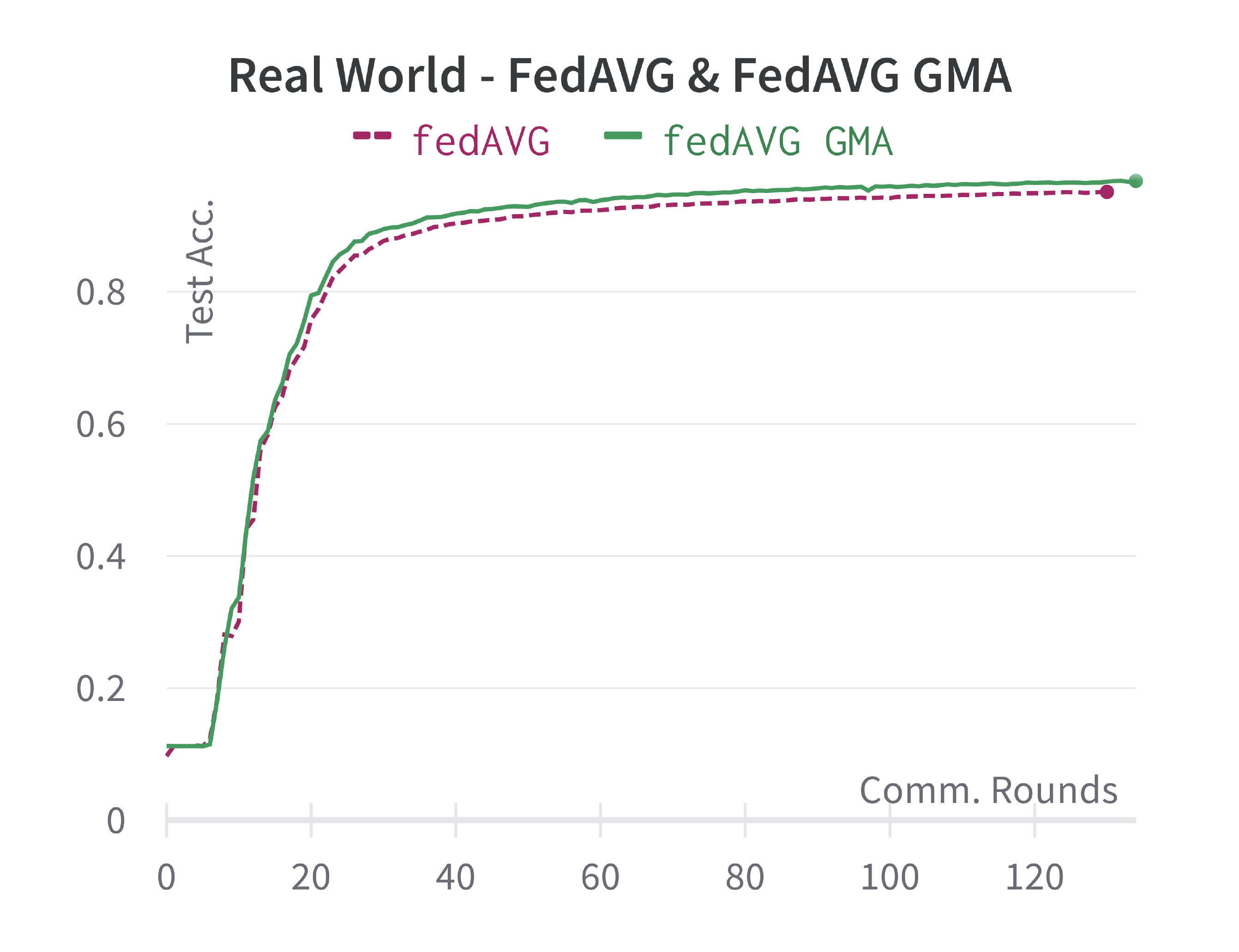}&
\includegraphics[width=3.8cm]{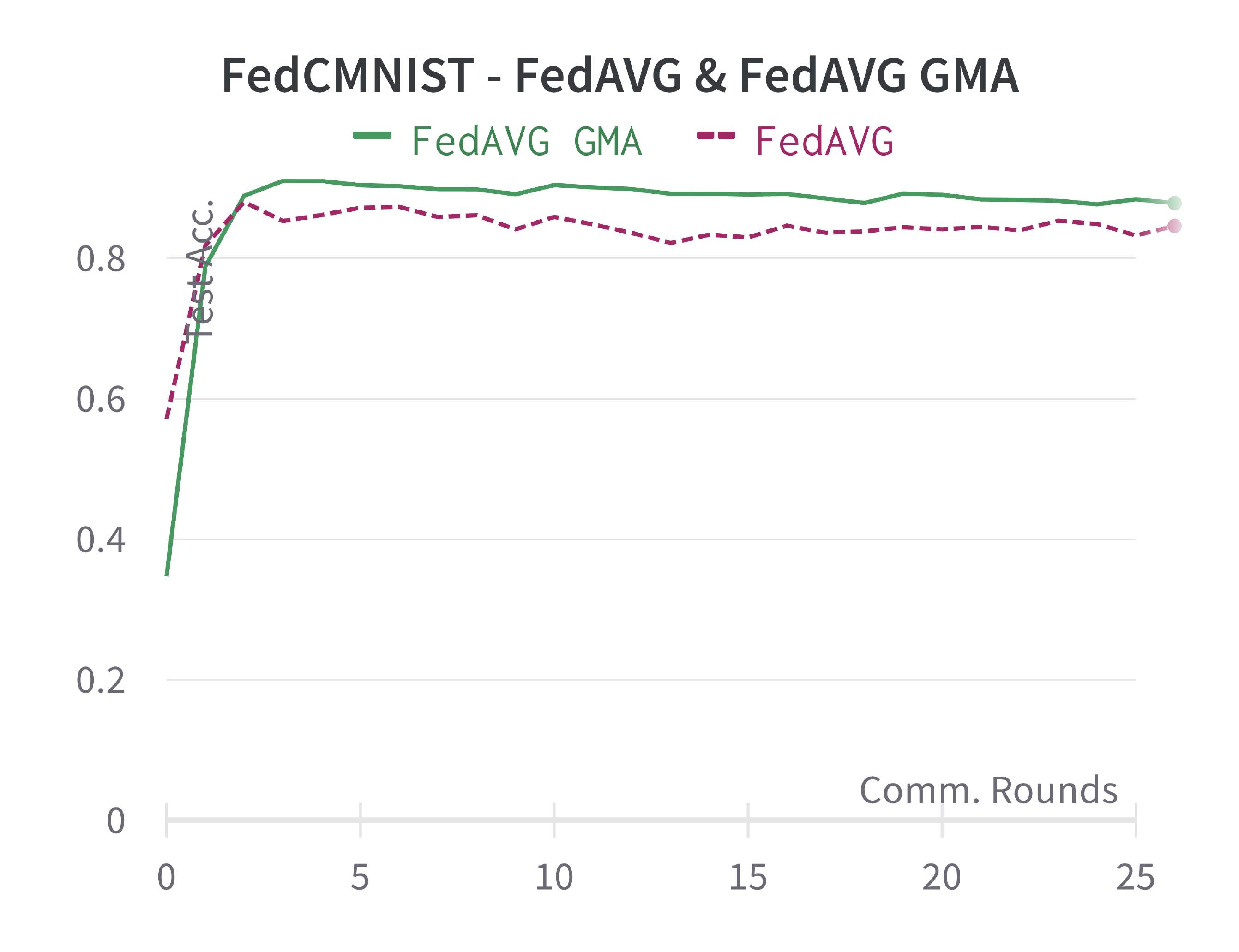}
\\
\includegraphics[width=3.8cm]{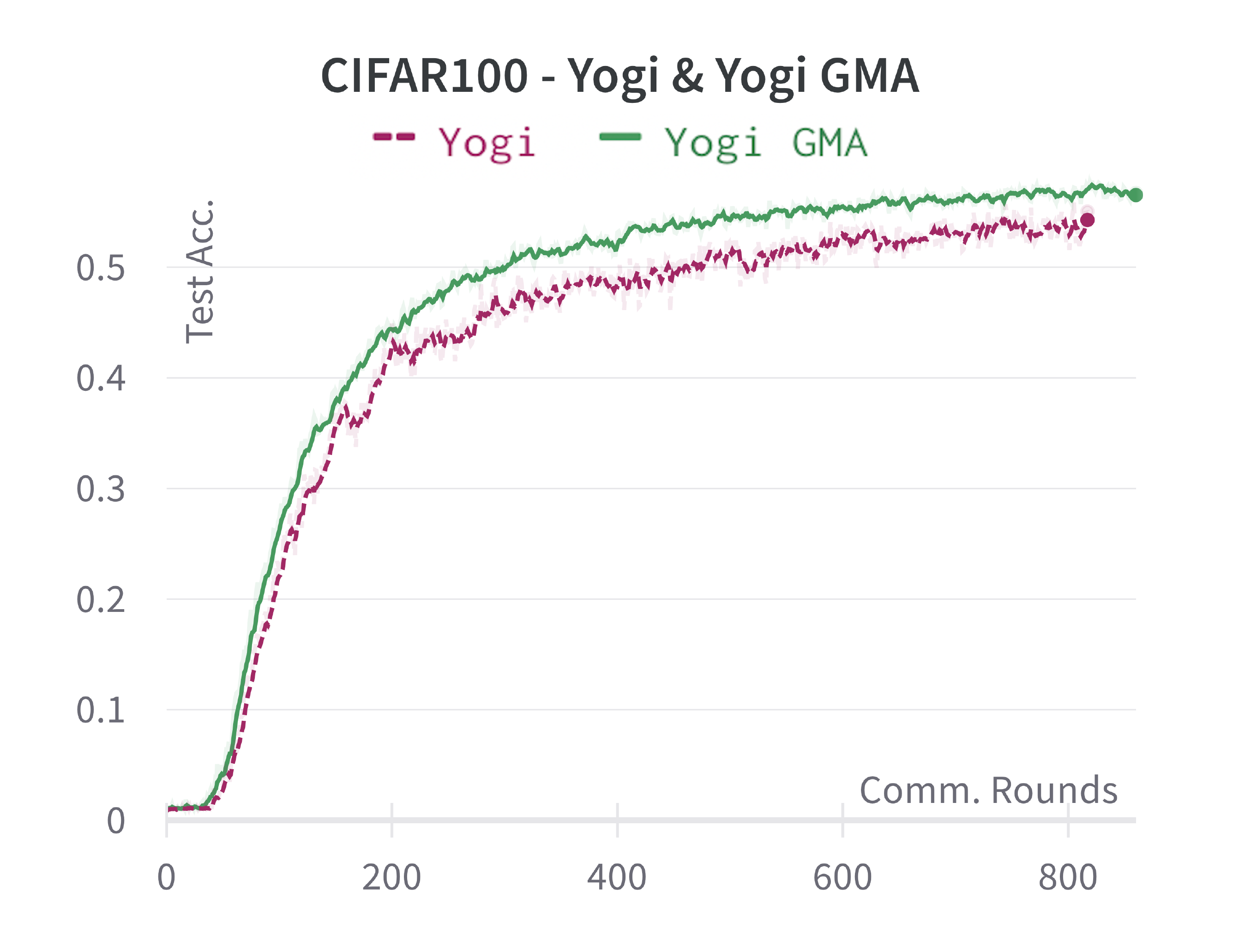}&
\includegraphics[width=3.8cm]{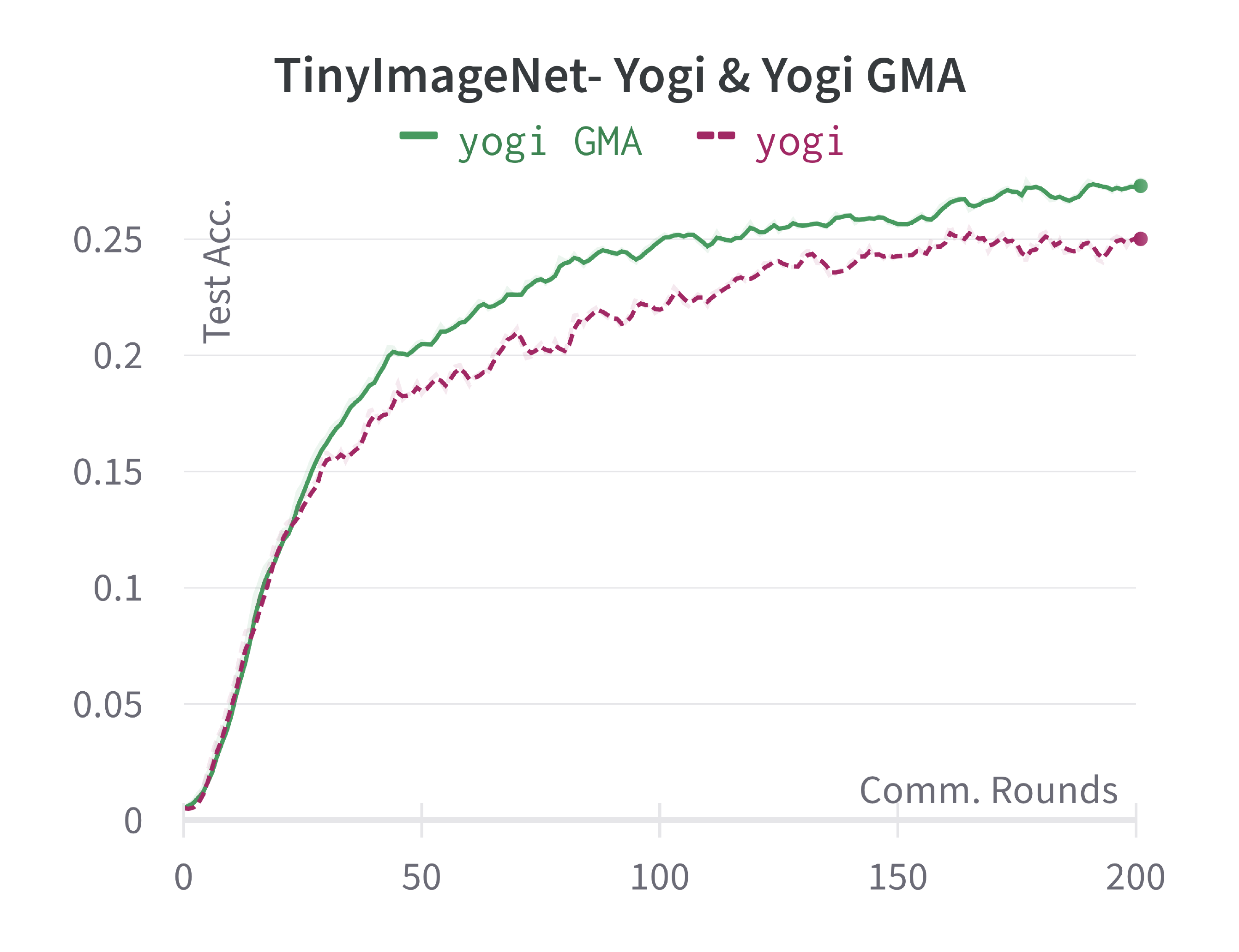} &
\includegraphics[width=3.8cm]{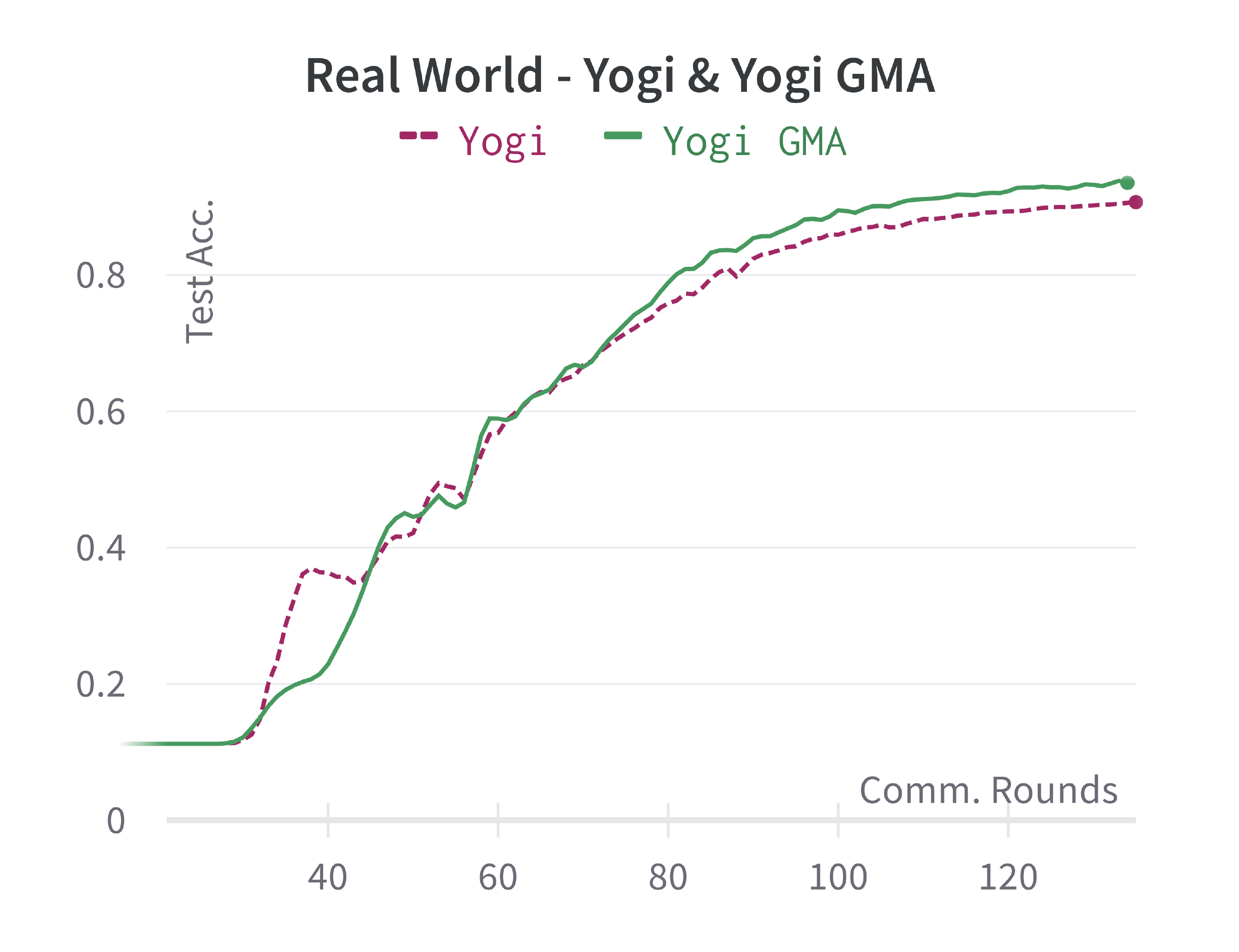}&
\includegraphics[width=3.8cm]{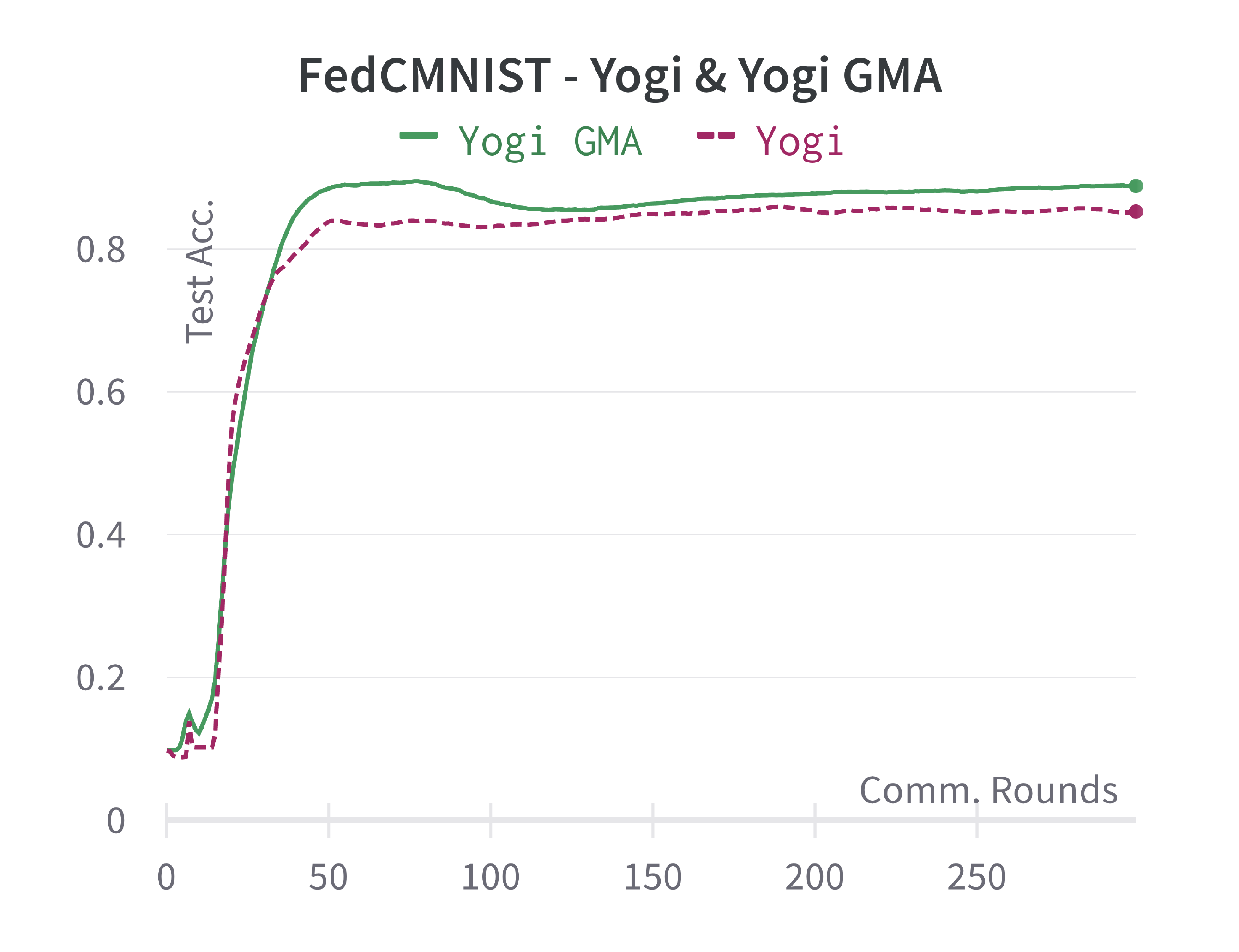}
\end{tabular}
\end{center}
\captionof{figure}{Test accuracy vs. communication rounds of non-adaptive (FedAVG) and adaptive (FedYogi) federated algorithms and their GMA versions on in-distribution CIFAR 100 and TinyImageNet, real-world FEMNIST, and mixed skew FedCMNIST datasets.}
\label{testfig}
\end{table*}
\begin{table*}[]
\begin{center}
\begin{tabular}{cccc}\includegraphics[width=3.3cm]{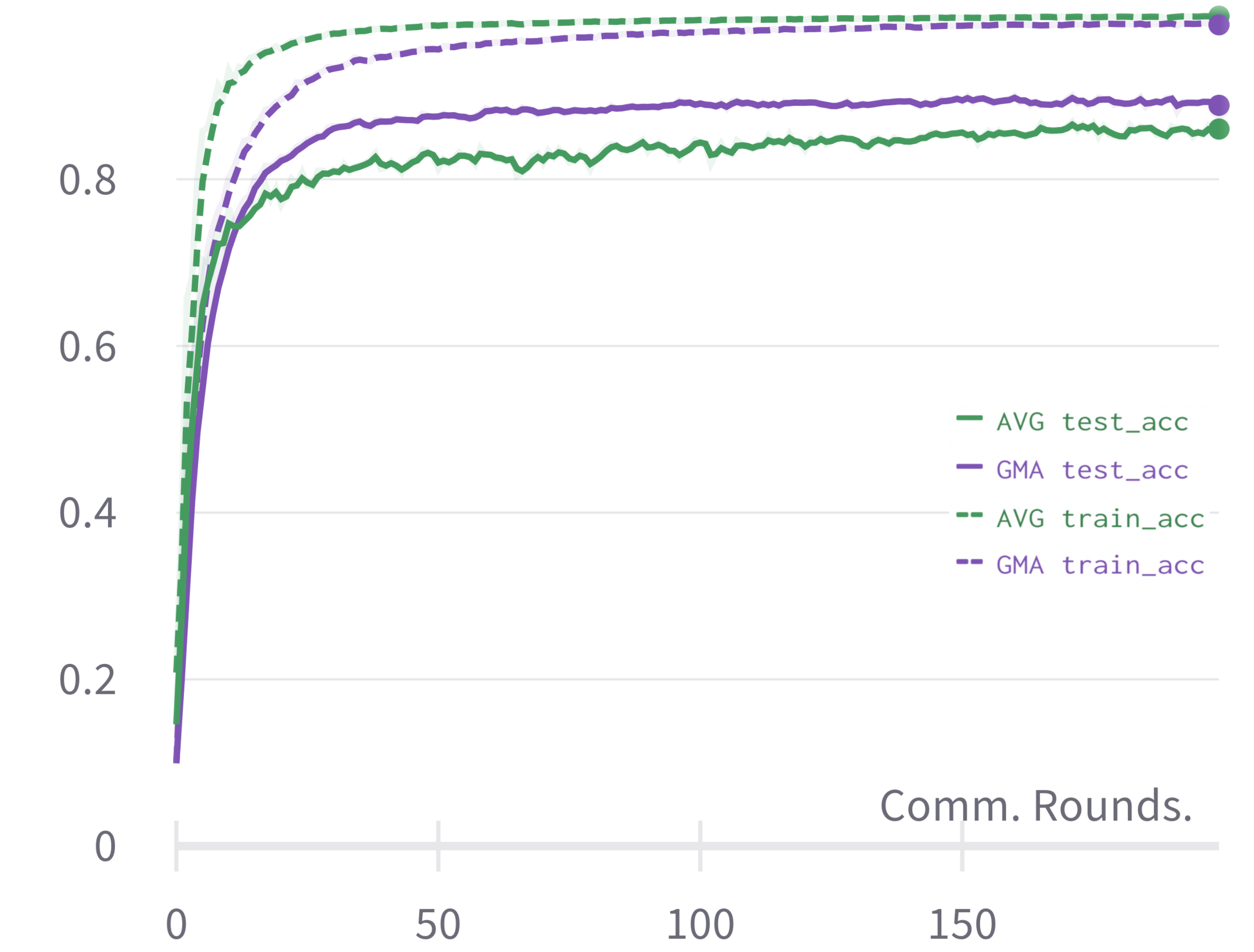}&
\includegraphics[width=3.5cm]{images/acc.pdf}&
\includegraphics[width=3.5cm]{images/acc2.pdf}&
\includegraphics[width=3.5cm]{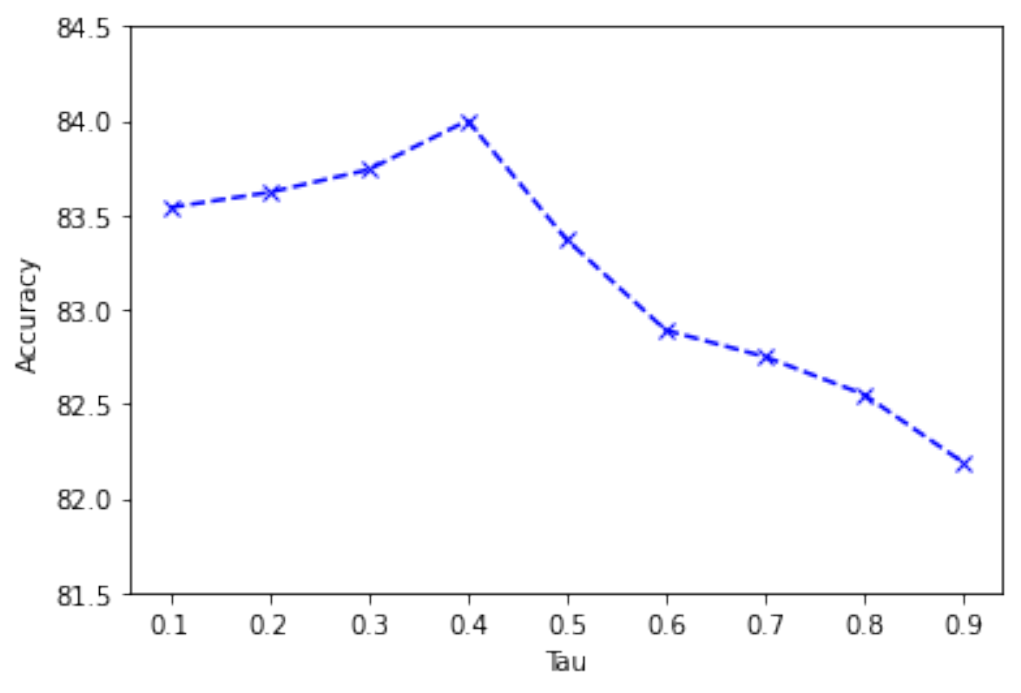}\\
\small{a) Acc. vs. Comm. Round} & \small{b) Test Acc. vs. Local Epochs} & \small{c) Test Acc. vs. \# Clients} & \small{d) Acc. vs. $\tau$}
\end{tabular}
\end{center}
\captionof{figure}{Ablations: a) Train and test accuracies of FedAVG on FedCMNIST with 10 clients and full participation. b) Test accuracies of FedAVG and GMA on 10 client FMNIST with full participation is recorded against increasing number of local epochs. c) Test accuracies against number of clients in the federated network while all clients participate in training in each round (full participation). d) Test accuracy against $\tau\in\{0,1\}$ on label skewed CIFAR10 distributed across 10 clients with full participation.}
\label{morefig}
\vspace{-8pt}
\end{table*}

\textbf{Hyperparameters} An SGD optimizer with a momentum ($\rho=0.9$) and cross-entropy loss was used to train each client before aggregation at the server in all our experiments. The momentum parameters of adaptive federated optimizers are fixed at $\beta_1 = 0.9$ and $\beta_2 = 0.99$ as per \cite{reddi2021adaptive}. For each of the considered algorithms we tune the local client model learning rates, global model learning rates, tau, and number of local epochs (CIFAR and TinyImageNet)  to consider the best performances of the algorithms. More details on the hyperparameter tuning is given in Appendix.

\subsection{Real-World Evaluation}
In the practical federated setting, the data across clients is heterogeneous and the clients which deploy the global model (including test clients, non-participating clients, and new clients in the federated network) can have data distribution different from that at any train clients. This can be simulated by using a realistic federated data characterised by a feature distribution skew unique to each client including the test clients. Specifically, we use train data from the same domain distributed across clients such that each client has data corresponding to one user (or a set of users) unique to the client. The test data consists of data from the same domain as the train dataset distributed across clients but from one user (or a set of users) not included in the set of train clients. The test clients do not participate in any training rounds. That is, it is out-of-distribution to the train data. A similar feature skew is provided by Federated EMNIST \cite{caldas2019leaf} where the data points have a user identifier. 
The test performance of the algorithms and their gradient masked alternatives are given in Table \ref{OOD}. 
We observe that gradient masking outperforms naive averaging. In the next section we consider a more complex feature skewed out-of-distribution to further evaluate GMA.

\subsection{Out-of-Distribution Evaluation}
A more complex OOD test can be implemented to better understand the performance of gradient masking in federated learning. For this we induce unique spurious mechanisms or features in the clients (including test) besides the class label based heterogeneity. The global model would be tested on a dataset having a spurious mechanism that was not present in any of the train clients, while the spurious mechanism at each train client is unique to itself. This ensures that all clients (including test clients) are out-of-distribution to each other. We combine this with label skew that is similar to the non-i.i.d heterogeneity in \cite{mcmahan2017communication}. FedCMNIST \cite{francis2021causal}, a federated multiclass version of CMNIST \cite{arjovsky2020invariant} with multiple color-label correlations was used for this. The invariant mechanism here is the digit. There also exists a spurious mechanism marked by a color given to the numbers. The color is digit specific to induce correlation to the label. The performance of various algorithms and their gradient masked averaging counterparts on these datasets is given in Table \ref{OOD}. It is to be noted that in almost all cases  gradient masking outperforms naive averaging. Figure \ref{morefig} a. shows train and test curves of the algorithms and their gradient masked alternatives on non-i.i.d distribution across 10 clients with full participation on FedCMNIST. It is to be noted that although GMA train accuracies are less than or equal to that of naive averaging, the GMA test accuracies are higher. This indicates that gradient masked versions generalize better than the naive versions.

\subsection{Quantity Skew}
In a practical federated setting the quantity of data available for update at each client varies drastically. Depending on connectivity, processing power, user behaviour, and various other factors, the number of data samples generated at each client can differ from zero or one data point to an extremely large number. To simulate this quantity imbalance, we have used a Dirichlet distribution based quantity skew with $\beta = 0.5$ as in \cite{li2021federated} on CIFAR-10 across 100 and 10 clients with 10 clients participating in each communication round. The Dirichlet sampling is independent of the labels or features of the dataset. The test data contains samples from all classes. Table \ref{OOD} shows that GMA outperforms AVG under quantity imbalances.

\subsection{Ablations and Analysis}
\paragraph{Ablations}
Using the non-iid distributed FMNIST data we study how the performance is affected as the number of clients grows while all clients participate in training in each round ($N=10,50,100,250$) (shown in Figure \ref{morefig} c.) and the number of local epochs increases ($E=1,3,5,10,20$) (shown in Figure \ref{morefig} b.).
\vspace{-5pt}
\begin{wrapfigure}{L}{0.47\textwidth}
\begin{minipage}{0.47\textwidth}
\begin{table}[H]
\caption{Test accuracies of paticipating and non-participating clients of a single FedAVG round with and without GMA on FMNIST distributed across 10 cients. Participating clients of a round are those that are randomly sampled to contribute to training in that round and the remaining are the non-participating clients.\vspace{10pt}}
\small
\centering
\resizebox{\columnwidth}{!}{
\begin{tabular}{crlrl}
\hline
\multirow{2}{*}{Trial} & \multicolumn{2}{c}{\textbf{Participating}} & \multicolumn{2}{c}{\textbf{Non-Participating}} \\
                                & AVG     & GMA       & AVG     & GMA         \\ \hline
1                               & 84.34            & 85.67              & 81.63              & 85.39                \\
2                               & 87.22            & 89.44              & 72.38              & 73.04                \\
3                               & 89.34            & 92.11              & 71.21              & 77.18           \\\cdashline{1-5}     \\
Mean:                           & 86.97            & \textbf{89.07}     & 75.07              & \textbf{78.53}       \\ \cdashline{1-5}
Relative\\ Improv.:                           &             & 2.4\%    &            &    4.6\%    \\ \hline
\end{tabular}}
\label{tab:selected}
\end{table}
\end{minipage}
  \end{wrapfigure}
  
In the former, the data is distributed across 10 clients with varying number of local epochs and in the latter, the data is distributed across a varying number clients with local epochs =1. In both cases all clients participate in training in each communication round. It can be observed from various datasets and settings that gradient masking increasingly outperforms naive averaging in complex datasets and sampled settings.
This validates the enhanced invulnerability of gradient masking to the bias that could be induced by one or more clients in the network. 

\paragraph{Convex Objective}
To further understand the performance of gradient masking in the convex setting, we experiment with MNIST and FedAVG on both i.i.d and non-i.i.d data distributions and it was observed that GMA outperforms naive averaging in the non-iid setting. A logistic regression model with SGD with momentum optimizer was used at the clients for these experiments. When data distribution was i.i.d, GMA was converging to an average (over last 10 communication rounds) of 92.5\% test accuracy while naive averaging obtains to 92.4\%. Furthermore, when the data distribution across clients was non-i.i.d, the enhancement in performance was more significant with gradient masking. While naive averaging was converging to 87.0\% test and 92.0\% train, while GMA reached 88.5\% test and 92.2\% train. Further demonstrating GMA can generalized better in the non-iid case.

\paragraph{Effect on non-participating clients} GMA tends to improve performance gains in the relevant setting with lower client participation.  We can gain insight into this by considering an individual FL round where the non-participating clients data are out-of-distribution with respect to the obtained model, as their data is not used to update the global model in that round. Motivated by this observation we further study the behavior of a single FedAVG round with and without GMA on the FMNIST dataset using 3 local epochs on 10 clients with 5 clients participating in each round. The results are shown in Table~\ref{tab:selected} where we report the accuracy over participating and non-participating clients. We observe that GMA improves performance in both cases but the relative performance change on non-participating clients is higher ($4.6\%$) versus  $2.4\%$). This suggests the performance gain in lower client participation is stronger due to improvement on non-participating client performance.



\section{Conclusion}
We proposed a new aggregation scheme applicable to a wide variety of federated learning algorithms. The proposed method, gradient masking enhances generalization performance of the global model in FL by focusing on learning the invariances across clients. The simple masking outperforms their naive averaging versions across a variety of algorithms and datasets. Our theoretical analysis shows the convergence of the proposed masking algorithm and the stability of the proposed mask. Future directions include exploration of masks incorporating magnitude and other methods to better capture the invariances, thus leading to better generalization at the global model.

\bibliography{main}
\bibliographystyle{tmlr}

\appendix
\section{Appendix}
\subsection*{A. Drifting Bound of k}
Below is the proof of Lemma \ref{drift}:
\begin{proof}
All the core idea of this proof is to observe that for the batch of data $z^t_{k,n}$:

\begin{align}
    \Vert \Delta_t+K\eta_l \nabla f(w_t)\Vert^2&=\eta_l^2\Vert \sum_{k=1}^K \sum_{n=1}^N\frac{s_n}{\sum_{n=1}^Ns_n}\big(\nabla f_n(w_{t,k}^n;z^t_{k,n})-\nabla F_n(w_t)\big)\Vert^2\\
    &\leq \eta_l^2K\sum_{k=1}^K\Vert \sum_{n=1}^N\frac{s_n}{\sum_{n=1}^Ns_n}\big(\nabla f_n(w_{t,k}^n;z^t_{k,n})-\nabla F_n(w_t)\big)\Vert^2\\
    &\leq \eta_l^2K\sum_{k=1}^K \sum_{n=1}^N\frac{s_n}{\sum_{n=1}^Ns_n}\Vert\nabla f_n(w_{t,k}^n;z^t_{k,n})-\nabla F_n(w_t)\Vert^2\\
\end{align}
We also observe that:

$$\mathbb{E}[\Delta_t+K\eta_l\nabla f(w_t)]=\sum_{k=1}^K \sum_{n=1}^N\frac{s_n}{\sum_{n=1}^Ns_n}\big(\nabla F_n(w_{t,k}^n)-\nabla F_n(w_t)\big)$$

Now, we obtain that by taking the expectation and $L$-smoothness that:
\begin{align}
    \mathbb{E}[\Vert \Delta_t+K\eta_l \nabla f(w_t)\Vert^2]&\leq \eta_l^2K\sum_{k=1}^K \sum_{n=1}^N\frac{s_n}{\sum_{n=1}^Ns_n}(\Vert  F_n(w_{t,k}^n)-\nabla F_n(w_t) \Vert^2+\sum_{j=1}^d\sigma_{g,j}^2)\\
    &\leq \eta_l^2K\sum_{k=1}^K \sum_{n=1}^N\frac{s_n}{\sum_{n=1}^Ns_n}(L^2\Vert  w_{t,k}^n-w_t \Vert^2+\sum_{j=1}^d\sigma_{g,j}^2)
\end{align}

and we can use Lemma \ref{adaptation} and we obtain the conclusion (using $1\leq K)$ and a similar reasoning for the other part of the lemma.

\end{proof}

\subsection*{B. Algorithms}
This section gives the pseudocode of the the proposed gradient masking on FedAVG, FedADAM, and FedYogi. GMA can be plugged into most federated learning algorithms easily. Algorithm 2 shows how to extend FedAVG to also include FedYogi and FedAdam and highlights how the masking interplays with this. 
\label{alg:sca}

\begin{algorithm}[H]
   \caption{Gradient Masked FedAVG \cite{mcmahan2017communication}, 
   \lfbox[border-color=red,border-width=1pt]{FedADAM}, and \lfbox[border-color=blue,border-width=1pt]{FedYogi} \cite{reddi2021adaptive}}
     $\textbf{Server Executes:}
  $
\begin{algorithmic}
   \STATE $\text{Initialize }w_0
    $
    \FOR{$\text{each server epoch, t = 1,2,3,... }$}
    \STATE$\text{Choose C clients at random}$
      \FOR{$\text{each client in C, n}$}
        \STATE $w_t^n = \mathrm{ClientUpdate}(w_{t-1})
        $
         \STATE $\Delta_t^n = \frac{s_n}{\sum_{k\in C} s_k}(w_t^n - w_{t-1})
        $
      \ENDFOR
      \STATE $\Delta_t = \sum_{n\in C} \Delta_t^n$
        \STATE $m_t = \tilde{m}_{\tau}(\{\Delta_t^n\}_{n\in C})$
      
      \STATE \lfbox[border-color=red,border-width=1pt]{\lfbox[border-color=blue,border-width=1pt]{$z_t = \beta_1 z_{t-1} + (1-\beta_1) \Delta_t$}}
      \STATE \lfbox[border-color=blue,border-width=1pt]{$v_t = v_{t-1}-(1-\beta_2)  \Delta_t^2  \mathrm{sign}(v_{t-1}-\Delta_t^2)$}
      \STATE \lfbox[border-color=red,border-width=1pt]{$v_t = \beta_2 v_{t-1} + (1-\beta_2)  \Delta_t^2 $}
      \STATE \lfbox[border-color=red,border-width=1pt]{\lfbox[border-color=blue,border-width=1pt]{$\Delta_t = \frac{z_t}{\sqrt{v_t}+e^{-3}}$}}
      \STATE $ w_t = w_{t-1} + \eta_g* m_t \odot  \Delta_t$
    \ENDFOR
    
    \end{algorithmic}
  $\textbf{ClientUpdate(w, n):}
  $
 \begin{algorithmic}
    \STATE $\text{Initialize }w_0 = w
    $
    \FOR{$\text{each local client iteration, $k=0,1,2,3,..,K-1$}$}
    \STATE Sample $z\sim \mathcal{D}^n$
    \STATE $g_k = \nabla_{w}F_n(w_k;z)$
      \STATE $w_{k+1} = w_{k} - \eta_l\text{ } g_k
      $
    \ENDFOR \\
    \RETURN  $w_{K}$ $\text{ to server}$
\end{algorithmic}
\end{algorithm}

\subsection*{C. Additional Motivation}
\subsubsection*{C.1 Agreement and Homogeneity}
\begin{figure}[h]
\centering 
\label{homoge}
\includegraphics[width=0.45\textwidth]{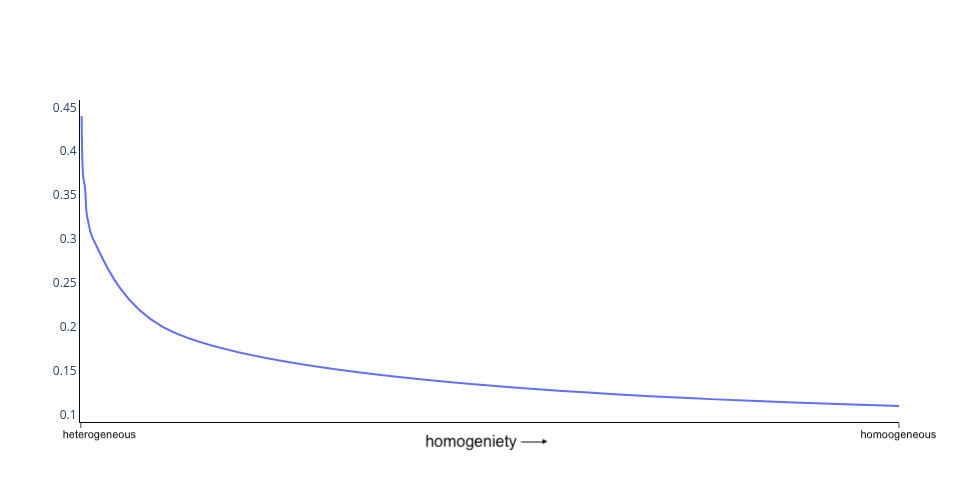}
\caption{fraction of gradients having agreement < tau (0.4) across clients vs. homogeneity / heterogeneity od data distribution across clients.}
\end{figure}
We analyze the effect of data heterogeneity on the fraction of gradients that were less that $\tau=0.4$. The gradients were adjusted to make update proportional to their agreement across clients. It was observed that with decreasing heterogeneity or increasing homogeneity, this fraction was reduced as shown in Figure 4. That is, with increasing homogeneity, the fraction of clients that were proportionately reduced decreased. Alternatively, with increasing homogeneity, the number of gradients having agreement > tau increased. This indicates the relevance of the gradient agreement across clients in federated learning. The heterogeneity was induced using a Dirichlet distribution with $\alpha=0.1$ representing heterogeneous case and $\alpha=100$ representing homogeneous case. The experiments were based on label skewed MNIST.

\subsubsection*{D.2 GMA vs. Binary Mask}
According to \cite{parascandolo2020learning}, though the binary AND-Mask they proposed improves generalization performance, it is slower in convergence. The same was observed in a federated setting. The binary mask converges to a higher test accuracy than FedAVG and approximately equal to the proposed GMA. However, the number of communication rounds required to achieve it was higher when the data was non-iid as shown in Figure 5. In a federated setting, communication bottleneck is severe and improved convergence speed cannot be sacrificed for a minor improvement in generalization performance.

\begin{figure}[h]
\centering 
\label{binary}
\includegraphics[width=0.4\textwidth]{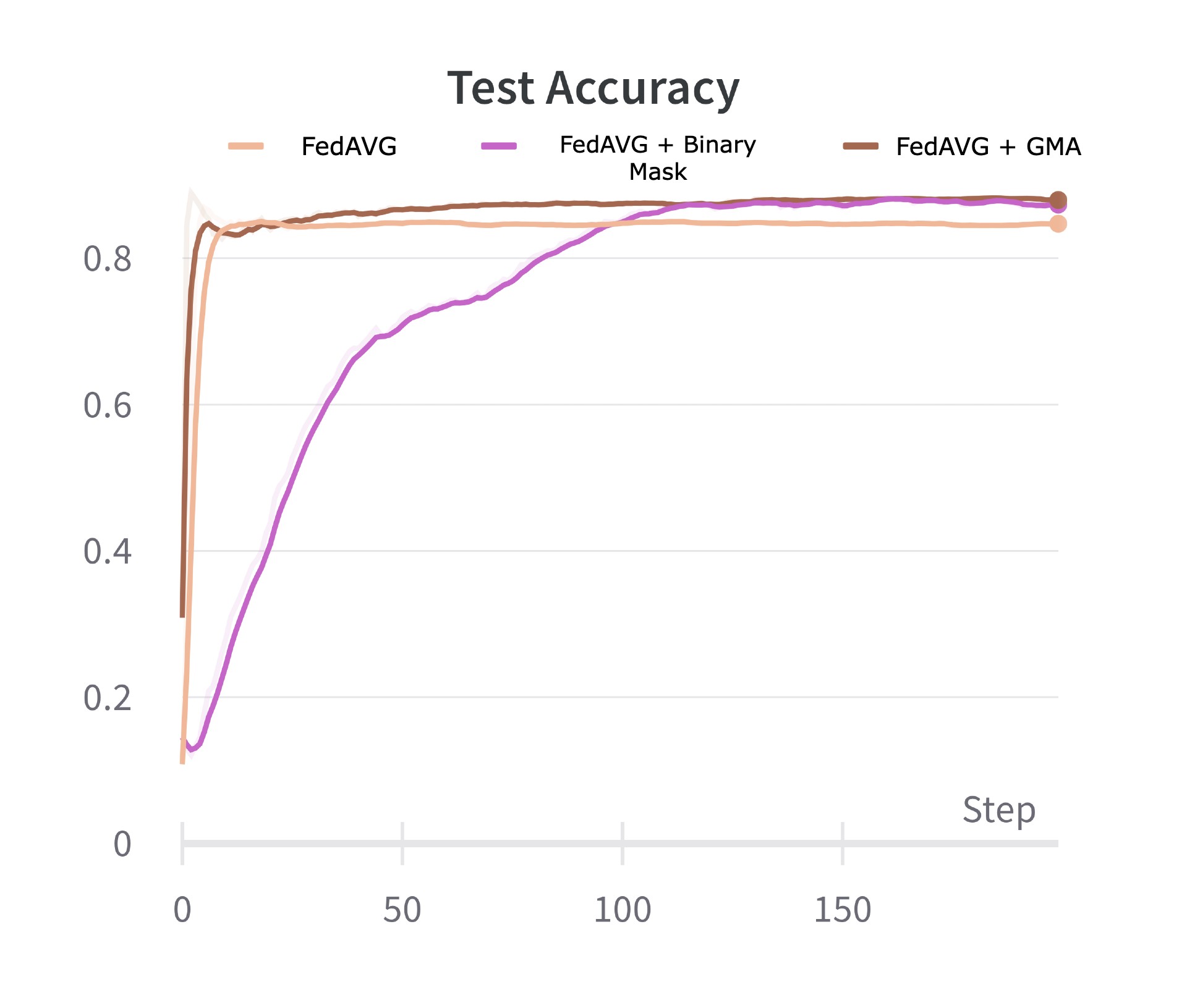}
\caption{Comparison on FedAVG, proposed FedGMA, and the binary mask proposed by \citet{parascandolo2020learning}. The experiments were conducted on MNIST with label distribution skew. }
\end{figure}

\subsubsection*{D.3 Effect on non-participating clients - 2 clients per communication round}
\begin{table}[h]
\caption{Test accuracies of paticipating and non-participating clients of a single FedAVG round with and without GMA on FMNIST distributed across 10 cients. Participating clients of a round are those that are randomly sampled to contribute to training in that round and the remaining clients are the non-participating clients.}
\small
\centering
\begin{tabular}{crlrl}
\hline
\multirow{2}{*}{Trial} & \multicolumn{2}{c}{\textbf{Participating}} & \multicolumn{2}{c}{\textbf{Non-Participating}} \\
                                & AVG     & GMA       & AVG     & GMA         \\ \hline
1                               & 89.73            & 87.72              & 71.61              & 74.34                \\
2                               & 90.70             & 85.57          & 71.01              & 76.56                \\
3                               & 88.5           & 86.73            & 72.09            & 76.43          \\\cdashline{1-5}   \vspace{-0.1cm}  \\
Mean:                           & 89.64           & 86.67     & 71.57            & 75.77       \\ \cdashline{1-5}
Relative\\ Improv.:                           &             & -3.3\%    &            &    5.8\%    \\ \hline
\end{tabular}
\label{tab:selected}
\vspace{5pt}
\end{table}

In this section we extend effect on non-participating clients subsection of section 5.5 of the paper. We look at the effect of GMA on non-participating clients when only 20\% of the clients participate in each communication round. In the table above, we consider 10 client with only 2 randomly chosen clients participating in each round. We observe that the relative performance improvement on non-participating clients is significant with GMA. We also note that performance of GMA on participating clients is lower than that of AVG. This is because GMA would update only if both clients agree on the sign. Due to this most parameters do not update in every round.\\

\subsection*{D. Hyperparameters}
\subsubsection*{D.1 Effect of Tau}

$\tau \in \{0,1\}$ is a hyperparameter introduced to threshold agreement across clients in Equation 2. It marks the minimum agreement required to consider the gradient for aggregation. When $\tau=0$ or negligible, the gradients of all parameters will have an agreement score greater than or equal to $\tau$. This makes all gradients consistent across clients. The agreement score would be over-written by $1$ and the equation becomes equal to that of naive federated aggregation where all gradients are  averaged with the same importance. This is equivalent to an underfit condition. The opposite overfit criterion can happen with a high $\tau$ value. In this case, no gradient would be considered dominant and all parameters updates would be diminished corresponding to their agreement score. The agreement score can be $1$ when the data distribution across clients is an ideal i.i.d distribution where all gradients across clients would be along the same direction. But in practical scenario, such data distributions are rare in a federated setting. When $\text{agreement} = 1$, $w^k = w^{k-1} - \eta_g \hspace{0.1cm} 1 \odot \Delta^k = w^k - \eta_g  \Delta^k $; equivalent to naive federated aggregation. This implies that the naive fedrated aggregation is a case of the proposed gradient masked averaging.\\

Throughout the experiments we have maintained $\tau=0.4$. This value was observed to give the best performance upon searching on $\{0,1.0.2,0.3,0.4,0.5,0.6,0.7,0.8,0.9\}$. $\tau=0.4$ implies that the gradient being considered have a 40\% excess or a total of 60\% of the client gradients along the dominant direction. From our experiments on CIFAR10 distributed non-iid across 10 clients with full-participation, it was observed that when $\tau$ is low, the model underfits. The accuracy was best at $\tau=0.4$ and on further increasing $\tau$, it was overfitting. This is visible from the test accuracy vs. $\tau$ plot in Figure 3 d. and test loss vs. $\tau$ plot in Figure 3d.\\


\subsubsection*{D.2 Effect of Client Momentum}

In contrast to the experiments in \cite{reddi2021adaptive}, we use an SGD optimizer with momentum ($\rho=0.9$) at each client for all our experiments. This was primarily because of the increase in test accuracy observed during our experiments on FedAVG with and without momentum. However, the enhancements due to gradient masking was independent of the momentum induced at the client optimizer. In both cases (with and without momentum), gradient masking was outperforming naive averaging in most of the algorithms and datasets. Table \ref{momentum} shows the performance of the algorithms and their GMA versions on non-i.i.d distributed FMNIST using an LeNet model. The client optimizers used in our experiments is a naive SGD optimizer with momentum parameter and it does not involve the correction parameter introduced in \cite{xu2021fedcm}.

\begin{table}[h]
\small
\caption{Performance of the algorithms and their GMA versions with and without momentum($\rho$) on non-i.i.d distributed FMNIST using an LeNet model on 10 clients with full-participation. Momentum improves performance of the algorithms. Irrespective of momentum, GMA outperforms AVG.}
\vspace{0.5cm}
\centering
\label{momentum}
\begin{tabular}{cccccc}
\hline
\multicolumn{1}{l}{}                                                                  & \multicolumn{1}{l}{} & \multicolumn{1}{l}{}                         & \multicolumn{1}{l}{}                         & \multicolumn{1}{l}{}                           & \multicolumn{1}{l}{}                          \\
\multicolumn{2}{c}{\multirow{2}{*}{\textbf{\begin{tabular}[c]{@{}c@{}}Dataset\\ (Model)\end{tabular}}}}      & \multicolumn{2}{c}{\textbf{\begin{tabular}[c]{@{}c@{}}FedAVG\\ ($\rho=0$)\end{tabular}}} & \multicolumn{2}{c}{\textbf{\begin{tabular}[c]{@{}c@{}}FedAVG\\ ($\rho=0.9$)\end{tabular}}} \\
\multicolumn{2}{c}{}                                                                                         & \textbf{AVG}                                 & \textbf{GMA}                                 & \textbf{AVG}                                   & \textbf{GMA}                                  \\
\multicolumn{1}{l}{}                                                                  & \multicolumn{1}{l}{} & \multicolumn{1}{l}{}                         & \multicolumn{1}{l}{}                         & \multicolumn{1}{l}{}                           & \multicolumn{1}{l}{}                          \\ \hline
\multicolumn{1}{l}{}                                                                  & \multicolumn{1}{l}{} & \multicolumn{1}{l}{}                         & \multicolumn{1}{l}{}                         & \multicolumn{1}{l}{}                           & \multicolumn{1}{l}{}                          \\
\multirow{2}{*}{\textbf{\begin{tabular}[c]{@{}c@{}}MNIST\\ (LeNet)\end{tabular}}}     & \textbf{IID}         & \textbf{99.01}                               & 98.96                                        & 99.1                                           & \textbf{99.16}                                \\
                                                                                      & \textbf{Non-IID}     & 98.43                                        & \textbf{98.55}                               & 98.87                                          & \textbf{98.9}                                 \\
\multicolumn{1}{l}{}                                                                  & \multicolumn{1}{l}{} & \multicolumn{1}{l}{}                         & \multicolumn{1}{l}{}                         & \multicolumn{1}{l}{}                           & \multicolumn{1}{l}{}                          \\
\multirow{2}{*}{\textbf{\begin{tabular}[c]{@{}c@{}}FMNIST\\ (LeNet)\end{tabular}}}    & \textbf{IID}         & \textbf{88.61}                               & 88.49                                        & 89.14                                          & \textbf{90.52}                                \\
                                                                                      & \textbf{Non-IID}     & 86.95                                        & \textbf{87.8}                                & 88.1                                           & \textbf{88.38}                                \\
\multicolumn{1}{l}{}                                                                  & \multicolumn{1}{l}{} & \multicolumn{1}{l}{}                         & \multicolumn{1}{l}{}                         & \multicolumn{1}{l}{}                           & \multicolumn{1}{l}{}                          \\
\multirow{2}{*}{\textbf{\begin{tabular}[c]{@{}c@{}}FEMNIST\\ (LeNet)\end{tabular}}}   & \textbf{IID}         & 98.8                                         & \textbf{98.92}                               & \textbf{99.7}                                  & 99.68                                         \\
                                                                                      & \textbf{Non-IID}     & 92.17                                        & \textbf{94.61}                               & 94.2                                           & \textbf{96.04}                                \\
\multicolumn{1}{l}{}                                                                  & \multicolumn{1}{l}{} & \multicolumn{1}{l}{}                         & \multicolumn{1}{l}{}                         & \multicolumn{1}{l}{}                           & \multicolumn{1}{l}{}                          \\
\multirow{2}{*}{\textbf{\begin{tabular}[c]{@{}c@{}}CIFAR-10\\ (ResNet)\end{tabular}}} & \textbf{IID}         &       85.8                                       &      \textbf{86.31}                                        &            87.3                                    &        \textbf{87.61}                                       \\
                                                                                      & \textbf{Non-IID}     &  81.1                                            & \textbf{82.28}                                              &    83.25                                            &   \textbf{83.95}                                            \\
\multicolumn{1}{l}{}                                                                  & \multicolumn{1}{l}{} & \multicolumn{1}{l}{}                         & \multicolumn{1}{l}{}                         & \multicolumn{1}{l}{}                           & \multicolumn{1}{l}{}                          \\ \hline
\end{tabular}
\end{table}

\begin{table}[h]

\caption{Average in-distribution test performance(\%) over the last 10 communication rounds of FedAVG, FedProx, SCAFFOLD, FedAdam, FedYogi and their GMA versions on i.i.d and non-i.i.d distributions of CIFAR-10 on ResNet18 models using batch normalization and group normalization. The best result among AVG and GMA versions of each algorithm is shown in bold. }
\label{norm}
\vspace{0.5cm}
\centering
\resizebox{\columnwidth}{!}{
\begin{tabular}{cccccccccc}
\hline
                                                                                                                                    &                                              &                                            &                                                                                          &                                              &                                             &                                            &                                              &                                             &                                             \\
\multicolumn{2}{c}{\multirow{2}{*}{\textbf{\begin{tabular}[c]{@{}c@{}}Dataset\\ (Model)\end{tabular}}}}  &  \multicolumn{2}{c}{\textbf{\begin{tabular}[c]{@{}c@{}}FedAVG\end{tabular}}} & \multicolumn{2}{c}{\textbf{\begin{tabular}[c]{@{}c@{}}FedProx\end{tabular}}} & \multicolumn{2}{c}{\textbf{\begin{tabular}[c]{@{}c@{}}FedADAM\end{tabular}}} & \multicolumn{2}{c}{\textbf{\begin{tabular}[c]{@{}c@{}}FedYogi\end{tabular}}} \\
\multicolumn{2}{c}{}                                                                                                              & \textbf{AVG}                               & \textbf{GMA}                                & \textbf{AVG}                               & \textbf{GMA}                                                                 & \textbf{AVG}                               & \textbf{GMA}                                 & \textbf{AVG}                                & \textbf{GMA}                                \\
                                                                                                                                   &                                              &                                            &                                             &                                                                                         &                                             &                                            &                                              &                                             &                                             \\ \hline
                                                                                      &                  &                                                                                          &                                             &                                            &                                              &                                              &                                             &                                                                                         &                                             \\
\multirow{2}{*}{\textbf{\begin{tabular}[c]{@{}c@{}}ResNet\\ BatchNorm\end{tabular}}} & \textbf{IID}     &         87.11                                      & \textbf{87.42}                              & 87.2                                                                              & \textbf{87.52}                              & 77.32                                      & \textbf{80.65}                               & 78.78                                       & \textbf{80.55}                              \\
                                                                                      & \textbf{Non-IID} &                                    77.3                                       & \textbf{79.9}                               & 78.4                                       & \textbf{80.2}                                                             & 69.82                                      & \textbf{74.05}                               & 67.29                                       & \textbf{71.51}                              \\
                                                                                      &                  &                                              &                                              &                                                                                        &                                              &                                              &                                             &                                            &                                                                                        \\                                                                                      \multirow{2}{*}{\textbf{\begin{tabular}[c]{@{}c@{}}ResNet\\ GroupNorm\end{tabular}}} & \textbf{IID}                                &                                          87.3                            &      \textbf{87.61}     &                                   87.18                       &                \textbf{ 87.5}                                                                 &     86.9                               &   \textbf{87.7}& 87.53&\textbf{87.78}          \\
                                                                                      & \textbf{Non-IID}   & 83.25   &\textbf{83.66}   &   83.87  &  \textbf{ 84.4}       & 83.53    &    \textbf{84.84}   &  83.17    &   \textbf{ 84.55}  \\
                                                                                      &                  &                                              &                                              &                                                                                       &                                              &                                              &                                             &                                            &                                                                                          \\ \hline
\end{tabular}}
\end{table}

\subsubsection*{D.3 Effect of Group Norm}

The test accuracies reported in paper corresponding to CIFAR-10 used a ResNet18 model  with batch normalization layers replaced by group normalization\citep{wu2018group} similar to the experiments in \cite{reddi2021adaptive}. Our initial experiments involved batch normalization as in the original ResNet and it was observed that the replacement of batch norm with group norm improved the test accuracies. Table \ref{norm} shows the comparison of the algorithms and their GMA versions on CIFAR-10 using ResNet model having batch normalization and group normalization layers. It is to be noted that irrespective of the normalization layer used, gradient masking was outperforming naive averaging across all algorithms and data distributions. This further validates the capabilities of the proposed GMA.

\subsubsection*{D.4 Grid Search Range}
For all our experiments a search for client learning rate ($\eta_l$) and global learning ($\eta_g$) rate across the grid specified below was conducted and the best performing learning rates of each algorithm was used for experiments that reported the test performances in the paper. The grid was fixed the same for all datasets and algorithms for easy experimentation. 

\begin{align*}
    \mathbf{\eta_l} &\in \{10^{-3},10^{-2},5.10^{-2},10^{-1}\} \\
    \mathbf{\eta_g} &\in \{10^{-2},10^{-1},1,1.5,2\} \\
    \mathbf{\tau}&\in \{0.1,0.2,0.3,0.4,0.5,0.6,0.7,0.8,0.9\} \\
\end{align*}

The values for global learning rate and client learning rate was observed to vary across algorithms and datasets. However, it was observed that the same combination was giving the best performance for GMA and AVG versions of the same algorithm for the same dataset. This enables better comparison of naive averaging and gradient masked versions of algorithms better. For $\tau$, it was observed that $\tau=0.4$ was giving the best result acriss algorithms and datasets. More details about hyperparameter $\tau$ is given in Appendix D.1.

\subsubsection*{D.5 Increased Clients and Local Epochs}
Table \ref{clients} and Table \ref{local epochs} shows average test accuracies corresponding to Figure 3 b and c. These report the average test performance over last 10 communication rounds on convergence of FedAVG and its GMA version across increasing number of selected clients and number of local epochs at each client per communication round.

\begin{table}[h]
\caption{Average test performance values of non-i.i.d FMNIST on varying number of clients}
\label{clients}
\vspace{0.5cm}
\centering
\begin{tabular}{llllll}
\hline
\multicolumn{2}{l}{\begin{tabular}[c]{@{}l@{}}Number of \\ Selected Clients\end{tabular}} & \textbf{10}    & \textbf{50}    & \textbf{100}   & \textbf{250}   \\ \hline
\multirow{2}{*}{\textbf{FedAVG}}                           & \textbf{AVG}                          & 88.1           & 82.42          & 74.39          & 71.11          \\
                                                  & \textbf{GMA}                          & \textbf{88.38} & \textbf{83.11} & \textbf{76.11} & \textbf{72.87} \\ \hline
\end{tabular}
\vspace{1cm}
\end{table}

\begin{table}[h]
\caption{Average test performance values of non-i.i.d FMNIST with varying number of local client epochs per communication round}
\label{local epochs}
\centering
\vspace{0.5cm}
\begin{tabular}{llllll}
\hline
\multicolumn{2}{l}{\begin{tabular}[c]{@{}l@{}}Number of\\ Local Epochs\end{tabular}} & \textbf{3}     & \textbf{5}     & \textbf{10}    & \textbf{20}    \\ \hline
\multirow{2}{*}{FedAVG}                        & \textbf{AVG}                        & 89.91          & 85.71          & 84.13          & 80.27          \\
                                               & \textbf{GMA}                        & \textbf{91.73} & \textbf{87.29} & \textbf{87.11} & \textbf{86.66} \\ \hline
\end{tabular}
\vspace{1cm}
\end{table}

\subsubsection*{D.6 Performance for same learning rates}
Table \ref{lr lr} shows the in-distribution test performance of naive and gradient masked versions of FedAVG on non-iid FMNIST dataset. Performance corresponding to the entire range of hyperparameters mentioned in Appendix C.4 has been given here. It can be noted that across learning rates where the model learns, gradient masking outperforms naive averaging. This suggest the gains are highly robust to hyperparameter choices.

\begin{table}[H]
\tiny
\centering
\caption{Performance of FedAVG across a range of global learning rate ($\eta_g$) and client learning rate ($\eta_l$) on non-iid FMNIST distributed across 10 clients with full participation. It can be observed that GMA outperforms AVG in most of the cases where the algorithms learn and converge.}
\label{lr lr}
\vspace{0.5cm}
\begin{tabular}{llllllllllllll}
\cline{3-13}
\multirow{18}{*}{\begin{tabular}[c]{@{}l@{}}$\eta_g$\end{tabular}} &                               &           &                &           &                &           &                &           &                &           &              &  &              \\
                                                                                     & \textbf{0.01}                 &           & 0.1            &           & 56.06          &           & 61.66          &           & 68.02          &           & 0.1          &  & AVG \\
                                                                                     & \textbf{}                     &           & 0.1            &           & \textbf{57.72} &           & \textbf{65.13} &           & \textbf{72.56} &           & 0.1          &  & GMA \\ \cline{3-13}
                                                                                     & \textbf{}                     &           &                &           &                &           &                &           &                &           &              &  & \textbf{}    \\
                                                                                     & \multirow{2}{*}{\textbf{0.1}} &           & 56.93          &           & 73.66          &           & 83.39          &           & 85.22          &           & 0.1          &  & AVG \\
                                                                                     &                               &           & \textbf{57.51} &           & \textbf{73.9}  &           & \textbf{83.79} &           & \textbf{87.22} &           & 0.1          &  & GMA \\ \cline{3-13}
                                                                                     & \textbf{}                     &           &                &           &                &           &                &           &                &           &              &  & \textbf{}    \\
                                                                                     & \multirow{2}{*}{\textbf{1.0}} &           & 72.69          &           & 86.49          &           & 87.76          &           & 88.31          &           & 0.1          &  & AVG \\
                                                                                     &                               &           & \textbf{73.34} &           & \textbf{87.0}    &           & \textbf{88.14} &           & \textbf{88.4}  &           & 0.1          &  & GMA \\ \cline{3-13}
                                                                                     & \textbf{}                     &           &                &           &                &           &                &           &                &           &              &  & \textbf{}    \\
                                                                                     & \multirow{2}{*}{\textbf{1.5}} &           & \textbf{77.11}          &           & 86.29          &           & 87.82          &           & 86.96          &           & 0.1          &  & AVG \\
                                                                                     &                               &           & 75.63 &           & \textbf{87.04} &           & \textbf{88.21} &           & \textbf{88.3}  &           & 0.1          &  & GMA \\ \cline{3-13}
                                                                                     & \textbf{}                     &           &                &           &                &           &                &           &                &           &              &  & \textbf{}    \\
                                                                                     & \multirow{2}{*}{\textbf{2.0}} &           & 71.53          &           & 82.42          &           & 86.93          &           & 87.63          &           & 0.1          &  & AVG \\
                                                                                     &                               &           & \textbf{77.59} &           & \textbf{86.9}  &           & \textbf{87.6}  &           & \textbf{88.1}  &           & 0.1          &  & GMA \\ \cline{3-13}
                                                                                     & \textbf{}                     &           &                &           &                &           &                &           &                &           &              &  & \textbf{}    \\
                                                                                     & \textbf{}                     & \textbf{} & \textbf{0.001} & \textbf{} & \textbf{0.01}  & \textbf{} & \textbf{0.05}  & \textbf{} & \textbf{0.1}   & \textbf{} & \textbf{1.0} &  &              \\
                                                                                     &\\
                                                                                     & \multicolumn{13}{c}{$\eta_l$}                                                                                                                                          
\end{tabular}
\end{table}

\subsection*{E. Convex Objective}
For the experiments on convex objective we employed a logistic regression model on MNIST with FedAVG for global model approximation at the server. The model was created using a single linear layer in a perceptron model without any non-linear activation functions. A cross-entropy loss was used and the model was updates by an SGD optimizer with momentum at each client. Each client undergoes 1 update step before aggregation (gradient masked or naive avergaing) to approximate the global model. Experiments included i.i.d and non-i.i.d data distribution of data across clients with in-distribution test for global model evaluation. The number of clients selected at each round was set to 10 and the models were run until convergence. The various hyperparameters involved in the experiments are global model learning rate of 1.0, client model learning rate of 0.01, and $\tau=0.4$.\\

\subsection*{F. Membership Inerence Attack}
\begin{table}[h]
\begin{center}
\begin{tabular}{cccc}\includegraphics[width=6.5cm]{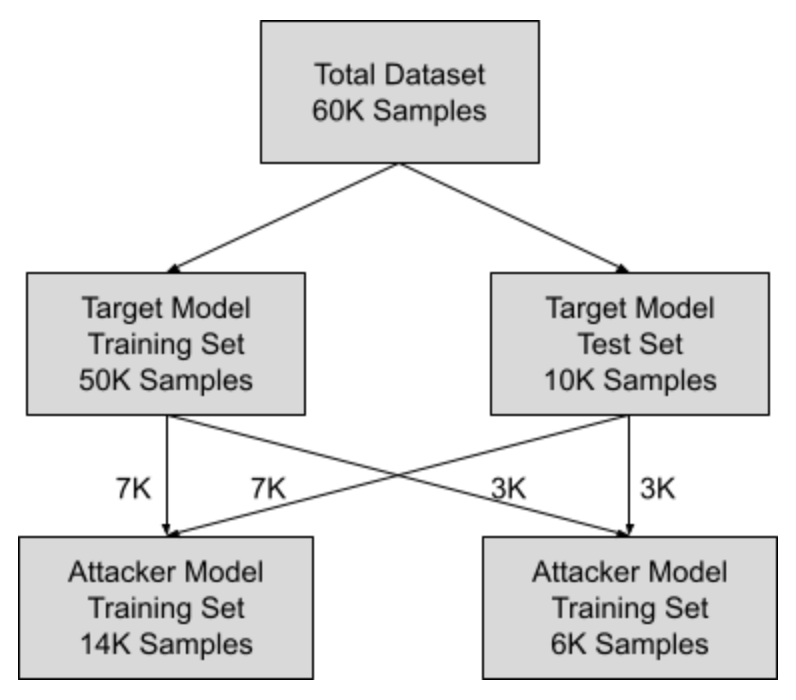}&
\includegraphics[width=6.5cm]{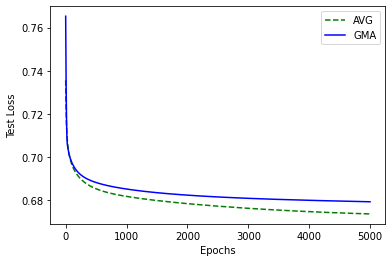}&
\\
\small{a)} & \small{b)}
\end{tabular}
\end{center}
\captionof{figure}{(a) Data split and creation for attacker model (b) Test loss vs. epochs of the logistic regression attacker model.}
\end{table}

Most machine learning models tends to overfit on their training data and such models are susceptible to membership inference attacks that can accurately predict whether a data sample was present in the training set of the model given the model output logits \cite{tople2020alleviating}. This is a major privacy breach and it can simulated by using a black-box adversarial attacker model. \\

The attacker model we have employed is a binary logistic regression model with binary cross-entropy loss. The input to this attacker model is the logits of the converged gradient masked averaging model and naive averaging model. The attacker model is supposed to identify whether the input of the global model corresponding to the logit given was present in the global model's training set or not. For our experiments, we used CIFAR-10 and ResNet models. Firstly, the GMA and AVG models were trained and tested. For each model, the logits corresponding to the train and test set data and their labels (whether train data or test data) were stored. The data is split as shown in Figure 7 a \cite{tople2020alleviating} and the attacker model is trained for 5000 rounds. The accuracy is as reported in the paper and loss as shown in Figure 7 b . A lower attack accuracy of gradient masked implies that GMA has better immunity to membership inference attacks than naive averaging global models.\\

This experiment is based on \cite{tople2020alleviating} which suggests that algorithms focusing on learning the causal mechanisms provide stronger privacy guarantees in certain cases, for example they can be  more robust to membership inference attacks and model inversion attacks. Based on our experiments we observe that the attack accuracy with respect to the GMA model is 55\% while that of naive averaging model is 57\%. This suggests that gradient masking can potentially enhance robustness to membership inference attacks.
\subsection*{G. Mask Stability}
Below we establish a concentration bound, to determine the likelihood of applying the masking operation to a coordinate
\begin{proposition}[Mask stability]
Denote $\delta,\delta^n$ the r.v. corresponding respectively to a coordinate of $\Delta,\Delta^n$. Furthermore consider $\tilde{\delta}$ the r.v. for each coordinate of $\tilde \Delta$, where $\tilde \Delta=m_{t} \odot \Delta$. Assume that $\delta^n$ is $\sigma$-subGaussian, that the $\delta_n$ are mutually independent and write $\mu^n=\mathbb{E}[\delta^n]$. If $\frac{1}{N}\texttt{card}(\{n | \mu^n>0 \})>\tau$, then, with probability $1-\mathcal{O}\left(e^{-\frac{(\inf_{\mu_n>0}\mu_n)^2}{\sigma^2}}\right)$, we obtain
$\tilde \delta=\delta\,.$
\end{proposition}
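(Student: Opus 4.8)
The plan is to reduce the statement to a concentration bound on the sign agreement at the fixed coordinate, and then to control that agreement with a subGaussian tail estimate combined with a union bound.

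First I would unpack what $\tilde\delta = \delta$ means. By the definition of the soft mask (Eq.~3), a coordinate is left untouched exactly when its mask value equals $1$, i.e. when the agreement score $A = |\frac{1}{N}\sum_{n\le N}\mathrm{sign}(\delta^n)|$ satisfies $A \ge \tau$; when $A < \tau$ the coordinate is instead rescaled by $A$ and $\tilde\delta = A\,\delta \ne \delta$ in general. Hence it suffices to show that $A \ge \tau$ with the probability claimed, so the proposition reduces to a lower bound on the realized fraction of clients whose sign matches the dominant direction.

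Next I would turn the assumption on the means into a per-client sign guarantee. For every $n$ with $\mu^n > 0$, $\sigma$-subGaussianity yields the one-sided tail $\mathbb{P}(\mathrm{sign}(\delta^n)\ne +1) = \mathbb{P}(\delta^n \le 0) = \mathbb{P}(\delta^n - \mu^n \le -\mu^n) \le \exp(-\frac{(\mu^n)^2}{2\sigma^2})$. Using mutual independence of the $\delta^n$ and a union bound over $S_+ = \{n : \mu^n > 0\}$, the probability that any positive-mean client fails to realize a $+1$ sign is at most $|S_+|\exp(-\frac{(\inf_{\mu_n>0}\mu_n)^2}{2\sigma^2}) = \mathcal{O}(e^{-(\inf_{\mu_n>0}\mu_n)^2/\sigma^2})$, which is precisely the failure probability advertised in the statement (the factor $|S_+| \le N$ and the constant $\frac12$ being absorbed into the $\mathcal{O}$). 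On the complementary good event, every client of $S_+$ votes $+1$, so the number of clients agreeing on the positive direction is at least $|S_+| > \tau N$.

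The main obstacle is converting this count of agreeing clients into the bound $A \ge \tau$. Since $A = |\frac{1}{N}\sum_n \mathrm{sign}(\delta^n)|$ is the absolute signed average, it also depends on the signs of the clients with $\mu^n \le 0$, which the good event does not pin down: their $-1$ votes could in principle cancel the positive majority and push $A$ below $\tau$. To close this gap I would symmetrize the argument, applying the same subGaussian tail to the clients with $\mu^n < 0$ (whose signs concentrate on $-1$) and handling the $\mu^n = 0$ clients separately, so that on the good event the signed average equals $p_+ - p_-$ with $p_\pm$ the fractions of strictly positive and negative mean clients. The cleanest regime in which the hypothesis $p_+ > \tau$ alone forces $A \ge \tau$ is the one where positive-mean clients dominate the coordinate (for instance when there are no negative-mean clients, giving $A = p_+ > \tau$); I would state the result in that regime and otherwise carry an additional tail term for the negative-mean clients. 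In all cases the probabilistic core, namely the subGaussian tail plus union bound yielding the $e^{-(\inf_{\mu_n>0}\mu_n)^2/\sigma^2}$ rate, is identical.
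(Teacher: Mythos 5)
Your probabilistic core is exactly the paper's: the paper likewise applies a per-client subGaussian deviation bound (two-sided, with $t=\mu^n/2$, so that each client with $\mu^n>0$ realizes $\delta^n>\mu^n/2>0$), intersects these events over the positive-mean clients, and absorbs the union-bound cardinality and the constants into the $\mathcal{O}\bigl(e^{-(\inf_{\mu_n>0}\mu_n)^2/\sigma^2}\bigr)$ failure probability. Up to your use of a one-sided tail where the paper uses a two-sided one, the two arguments coincide on this part.

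The difference is in the final step, and there you are more careful than the paper. After establishing that every positive-mean client votes $+1$, the paper simply asserts ``consequently, the mask is equal to 1,'' which is precisely the step you flag. You are right that it does not follow from the stated hypothesis: since $A=\bigl|\frac{1}{N}\sum_{n}\mathrm{sign}(\delta^n)\bigr|$, the uncontrolled $-1$ votes of clients with $\mu^n\le 0$ subtract from the positive ones, so on the good event one only gets $A=|2p_+-1|$ in the worst case (all remaining clients voting $-1$), and guaranteeing $A\ge\tau$ requires $p_+\ge(1+\tau)/2$, not merely $p_+>\tau$. Indeed the proposition as stated fails without a stronger hypothesis: take $N=2$, $\tau=0.4$, $\mu^1=+1$, $\mu^2=-1$; the positive-mean fraction is $1/2>\tau$, yet with high probability the two signs cancel, $A=0$, and $\tilde\delta=0\ne\delta$. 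So the gap you identify is a genuine gap in the paper's own proof, and your proposed repair (symmetrizing the tail bound over negative-mean clients and restricting to a regime where the positive votes dominate, or strengthening the threshold hypothesis) is the right kind of fix. One refinement to your fix: clients with $\mu^n=0$ have signs you cannot control at all, so ``no negative-mean clients'' is not quite enough; the clean regimes are either ``all $N$ clients have $\mu^n>0$'' (giving $A=1$ on the good event) or a hypothesis of the form $p_+\ge(1+\tau)/2$ with the remaining clients' votes counted adversarially.
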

\begin{proof}
We show a lower bound on  $\delta^n$. With probability $1-e^{-\frac{t^2}{\sigma^2}}$, we get $|\delta^n-\mu^n|<t$.
Let's thus pick $\tilde t=\inf_{\{\mu^n>0\}}\frac{\mu^n}2$. By considering the intersection of those events, it implies that with probability at least $1-e^{-\frac{\tilde t^2}{\sigma^2}}$, $\delta^n>\mu^n-\frac{\mu^n}2=\frac 12\mu^n>0$. Consequently, the mask is equal to 1 and $\delta^n=\tilde \delta^n$. 
\end{proof}
\end{document}